\theoremstyle{plain}
\newtheorem{theorem}{Theorem}[section]
\newtheorem{lemma}[theorem]{Lemma}
\theoremstyle{definition}
\newtheorem{definition}[theorem]{Definition}
\theoremstyle{remark}
\newtheorem{remarks}[theorem]{Remarks}
\theoremstyle{plain}
\newenvironment{customthm}[1]
  {\innercustomthm}
  {\endinnercustomthm}
\newenvironment{customlemma}[1]
  {\innercustomlemma}
  {\endinnercustomlemma}
\newcommand{\R}{\mathbb{R}}
\newcommand{\cR}{\mathcal{R}}
\newcommand{\cD}{\mathcal{D}}
\newcommand{\I}{\mathcal{I}}
\newcommand{\Conv}{\operatorname{conv}}
\newcommand{\Reshape}{\operatorname{reshape}}
\newcommand{\Matmul}{\operatorname{matmul}}
\newcommand{\comment}[1]{}
\DeclareMathOperator{\ReLU}{ReLU}
\title{
Model Preserving Compression for Neural Networks
}
\author{%
}
\author{
  Jerry Chee\thanks{equal contribution} \\
  Department of Computer Science\\
  Cornell University\\
  \texttt{jerrychee@cs.cornell.edu}\\
   \And
   Megan Renz\footnotemark[1]\\
   Department of Physics\\
   Cornell University \\
   \texttt{mr2268@cornell.edu}\\
   \AND
   Anil Damle \\
   Department of Computer Science\\
   Cornell University \\
   \texttt{damle@cornell.edu}\\
   \And
   Christopher De Sa \\
   Department of Computer Science\\
   Cornell University \\
   \texttt{cdesa@cs.cornell.edu}
}
\begin{document}

\maketitle

\begin{abstract}
After training complex deep learning models, a common task is to compress the model to reduce compute and storage demands. When compressing, it is desirable to preserve the original model's per-example decisions (e.g., to go beyond top-1 accuracy or preserve robustness), maintain the network's structure, automatically determine per-layer compression levels, and eliminate the need for fine tuning. No existing compression methods simultaneously satisfy these criteria---we introduce a principled approach that does by leveraging interpolative decompositions. Our approach simultaneously selects and eliminates channels 
(analogously, neurons), then constructs an interpolation matrix that propagates a correction into the next layer, preserving the network's structure. 
Consequently, our method achieves good performance even without fine tuning and admits theoretical analysis.
Our theoretical generalization bound for a one layer network lends itself naturally to a heuristic that allows our method to automatically choose per-layer sizes for deep networks.
We demonstrate the efficacy of our approach with strong empirical performance on a variety of tasks, models, and datasets---from simple one-hidden-layer networks to deep networks on ImageNet.

\end{abstract}


\section{Introduction}
\label{sec:intro}
There has recently been a significant theoretical and empirical effort to understand the necessity of over-parameterization for deep learning models and to develop algorithmic techniques that can effectively compress (or prune) them while retaining performance. 
This work is largely driven by results that show over-parameterization may aid in training via stochastic algorithms, but that the solutions found are often highly redundant~\cite{frankle2018lottery,frankle2020linear,jaderberg2014speeding,luo2017thinet} and therefore can be compressed. 
Typically the focus has been on reducing parameter counts or FLOPs while maintaining accuracy---a task that often necessitates significant fine tuning of compressed models. However, we argue that further criteria are warranted to enable the adoption of compression methods amongst practitioners: 
(1) preserving the model's per-example decisions ensures the preservation of more fine-grain properties, e.g. 
fairness~\cite{barocas-hardt-narayanan,rothfair,goel2018fair,roth2019fair,aram2021fair} or adversarial robustness~\cite{carlini2019evaluating,liang2019adversarial,kolter2019adversarial,hein2020adversarial,silva2020adversarial};
(2) compressing networks while retaining their computational structure ensures compatibility with the rest of the machine learning pipeline; and 
(3) automatically determining per-layer sizing and minimizing the need for fine tuning alleviates key challenges when using compression methods in practice.



We develop a novel compression method that satisfies these three criteria. In particular, it uses unlabeled data to remove redundant neurons/channels while simultaneously updating the remaining network parameters to explicitly correct for the removed neurons/channels. 
The core technical development in our pruning method is the novel use of an
interpolative decomposition (see Section~\ref{sec:ID}) to approximate the post-activation output of layers.
The interpolative decomposition~(ID)
explicitly represents 
the pruned neurons/channels as a linear combination of those kept---allowing us to preserve the model with little to no fine tuning.
Moreover the specific structure of the ID allows us to sub-select channels backward through the activation function, and propagate information about the pruned channels forward, fusing it into the adjacent layer and preserving the structure of the network (albeit with smaller layer sizes).
Collectively, these properties allow our ID based compression approach to produce a structurally similar compressed network that well approximates (to any desired accuracy) the original network without any additional training---a feature we show theoretically for simple fully connected networks in Section~\ref{sec:pruneID}. We then extend the results to different layer types and more complex architectures in Section~\ref{sec:extendid}, and show how our theoretical results allow us to automatically determine per-layer sizes for a compressed model based on the trade-off between FLOPs and the estimated error induced by pruning.

We ensure that our method truly compresses a network and preserves the decision boundaries, rather than effectively re-training a smaller network through extensive fine tuning through a new metric we develop in Section~\ref{sec:correlation}.  
Consider pruning a pretrained VGG16 model on CIFAR-10 with $93.6\%$ accuracy to $50\%$ of its FLOPs.
A variety of pruning methods achieve a reasonable accuracy of 93.2-93.6\%, including structured magnitude pruning, random channel pruning, and our ID-based method.  
However, methods which rely extensively on fine-tuning (magnitude and random pruning) do not preserve the decision boundaries of the original network.  
The predictions of these pruned models agree with the original model only~($93.2\%, 93.2\%$ respectively) of the time on the test set.
This level of agreement is barely better than a completely independently trained\footnote{
Consider a model $A$.
Another model $B$ of the same architectures is \emph{independently trained} if during training no information from $A$ is passed to $B$.
Essentially this means retraining the same architecture from scratch twice to obtain models $A$ and $B$.
}
(fullsize) VGG16 model~($93.0\%$).
In contrast, our method produces a compressed model whose predictions are highly correlated with the original model, agreeing $97.3\%$ of the time.

To complement our algorithmic developments and theoretical contributions, in Section~\ref{sec:experiments} we demonstrate the efficacy of our method on Atom3D~\cite{atom3d}, CIFAR-10~\cite{datacifar10}, and ImageNet~\cite{deng2009imagenet}. The experiments are built to highlight how our ID-based compression method satisfies the desired criteria while competing methods do not. To accomplish this, we first introduce a metric of model similarity based on prediction correlation---this metric allows us to systematically determine how well compressed models are preserving the original model. To highlight the practical efficacy of our method (particularly its ability to automatically chose the size of the compressed network) we show our method successfully extends to Atom3D~\cite{atom3d}, simulating a real-world scenario with little prior compression work and baseline models with a different architecture (based on 3D convolutions). More generally, on all datasets we show our method has superior accuracy before fine tuning and is competitive with other state-of-the-art pruning methods after additional training is performed all while producing compressed models which better correlate with the original. Moreover, because our method only reduces layer sizes but otherwise preserves network structure, it can be easily composed with other compression 
methods. 
This allows us to substantially compress models for CIFAR-10 and ImageNet  without the use of any global fine tuning.

In summary, the key contributions of this paper\footnote{Our code is available at \href{https://github.com/jerry-chee/ModelPreserveCompressionNN}{https://github.com/jerry-chee/ModelPreserveCompressionNN}} are:
\begin{itemize}[leftmargin=*]
    \item[] \textbf{A model preserving compression method}
    We introduce an ID-based algorithm for compressing neural networks that preserves the model's decisions and maintains the network's layer structure without the need for fine tuning.
    \item[] \textbf{Theoretical guarantees.} 
    We provide theoretical guarantees for the output of our pruned model.
    \item[] \textbf{Practical efficacy.}  We demonstrate the efficacy of our method across the ATOM3D, CIFAR-10 and ImageNet data sets and using a variety of models.
    Our method requires no labeled data, automatically determines per-layer sizes, and often requires little to no fine tuning.
\end{itemize}

\section{Related Work}
\label{sec:related}
There are a number of different design choices to be made in the compression and pruning process.  Classical pruning involves eliminating either channels (analogously neurons) or individual weights, sometimes in a structured way.  
Magnitude pruning on both weights and neurons is still considered an effective approach~\cite{blalock2020state,frankle2018lottery,frankle2020pruning,gale2019state,liu2019rethink,li2017l1}.  When pruning, the method may incorporate a correction to future layers, though it often does not ~\cite{he2019fpgm,luo2017thinet}. Some methods that correct the network chose to do local fine tuning ~\cite{luo2017thinet,he2017feat,zhuang2018dcp,peng2019ccp,liu2017netslim}, whereas others do not ~\cite{liebenwein2020provable,he2018amc}.  

Of particular note, matrix approximation methods
~\cite{denten2014svd,idel2020lrank,liebenwein2021alds,peng2018group,lebedev2015cpdecomp}
often satisfy the first criteria we desire for compression methods, but typically not the second as they add additional layers. These methods sometimes incorporate local fine tuning after compression ~\cite{idel2020lrank,jaderberg2014speeding,liebenwein2021alds,peng2018group,zhang20153dfilter} and sometimes do not ~\cite{denten2014svd,lebedev2015cpdecomp}.
In contrast, structured pruning methods
~\cite{he2019fpgm,he2018amc,liebenwein2020provable,liu2017netslim,liu2019rethink,luo2017thinet}
can satisfy the second criteria we outlined, but typically not the first as they do a poor job of preserving the model's decisions and often require excessive amounts of fine tuning.

Pruning with coresets~\cite{mussay2020coreset} is the closest in spirit to our own work and provides a way to select a subset of neurons in the current layer that can approximate those in the next layer as well as new weight connections. Of note, \citet{mussay2020coreset} provide a sample complexity result,
and demonstrate their method on fully connected (but not convolution) layers.
The HRank method~\cite{lin2020hrank} is also close in spirit to our own, and works by selectively pruning channels that produce low-rank feature maps.   However, the method does not propagate updates into the next layer and instead relies on excessive amounts of fine tuning (30 epochs for each layer pruned) to fix the network's accuracy.

Recently the literature has started to consider criteria beyond topline accuracy metrics, and \citet{liebenwein2021lost} use measures of functional approximation to conclude that pruned networks well approximate the original models. 
\citet{marx2020multiplicity} characterize when linear models can achieve similar accuracy but with competing predictions.


\section{Interpolative decompositions}
\label{sec:ID}

Our pruning strategy relies on a structured low-rank approximation known as an interpolative decomposition (ID). 
Classically, the Singular Value Decomposition (SVD) (see, e.g.,~\cite{GVL}) provides an optimal low-rank approximation. However, because we consider matrices that include the non-linear activation function the SVD cannot be directly used to either subselect neurons or generate new ones (since it is unclear how to propagate singular vectors ``backwards'' through the non-linearity). In contrast, an ID constructs a structured low-rank approximation of a matrix $A$ where the basis used for the approximation is constrained to be a subset of the columns of $A$. For a matrix $A\in\R^{n\times m}$ we let $A_{\mathcal{J},\I}$ denote a sub-selection of the matrix $A$ using sets $\mathcal{J}\subset [n]$ to denote the selected rows and $\I\subset [m]$ to denote the selected columns; $:$ denotes a selection of all rows or columns.

\begin{definition}[Interpolative Decomposition]
\label{def:ID}
Let $A \in \R^{n \times m} $and $\epsilon \geq 0$. 
An $\epsilon$-accurate \emph{interpolative decomposition} 
$A \approx A_{:,\I} T$
is a subset of columns of A, denoted with the index subset $\I \subset [m],$ and an associated interpolation matrix $T$ such that $\|A - A_{:,\I} T \|_2 \leq \epsilon\|A\|_2.$
\end{definition}

\begin{remarks}
\hspace{-0.8em}
When computing an ID, we would like to find the smallest possible $k\equiv\lvert\I\rvert$ such that the accuracy requirement is satisfied. Moreover, we would like $T$ to have entries of reasonable magnitude and approximation error not much larger than the best possible for a given $k$ (i.e., $\|A - A_{:,\I}T\|_2 = t \sigma_{k+1}(A)$ for some small $t \geq 1$). While necessarily sub-optimal, the advantage is that we explicitly use a subset of the columns of $A$ to build the approximation.
\end{remarks}

IDs are well studied~\cite{cheng2005compression,martinsson2011randomized}, widely used in the domain of rank-structured matrices~\cite{ho2012fast,ho2015hierarchical,ho2016hierarchical,martinsson2005fast,martinsson2019fast,minden2017recursive}, and are closely related to CUR decompositions~\cite{mahoney2009cur,voronin2017efficient} and subset selection problems~\cite{boutsidis2009improved,civril2009selecting,tropp2009column}. While these decompositions always exist, finding them optimally is a difficult task and in this work we appeal to what are known as (strong) rank-revealing QR factorizations~\cite{businger1965linear,chan1992some,chandrasekaran1994rank,gu1996efficient,hong1992rank}.

\begin{definition}[Rank-revealing QR factorization]
Let $A \in \R^{n \times m}$, $\ell = \min(n,m)$, and take any $k \leq \ell$.
A \emph{rank-revealing QR factorization} of $A$ computes a permutation matrix $\Pi \in \R^{m \times m}$, an upper-trapezoidal matrix $R \in \R^{\ell \times m}$ (i.e. $R_{i,j}=0 \text{ if } i > j$), and a matrix $Q \in \R^{n \times \ell}$ with orthonormal columns (i.e., $Q^\top Q = I$) such that $A \Pi = Q R$ and $Q$ and $R$ satisfy certain properties.
Splitting $\Pi, Q$, and $R$ into $\Pi_1 \in \R^{m \times k}$, $\Pi_2 \in \R^{m \times (m-k)}$, $Q_1 \in \R^{n \times k}$, $Q_2 \in \R^{n \times (\ell-k)}$, $R_{11} \in \R^{k \times k}$, 
$R_{12} \in \R^{k \times (m-k)}$, and $R_{22} \in \R^{(\ell-k)\times(m-k)}$ we can write
\begin{equation}
\label{eq:rankQR}
    A
    \begin{bmatrix}
    \Pi_1 & \Pi_2
    \end{bmatrix}
    =
    \begin{bmatrix}
    Q_1 & Q_2
    \end{bmatrix}
    \begin{bmatrix}
    R_{11} & R_{12} \\
    & R_{22}
    \end{bmatrix}.
\end{equation}
\end{definition}

\begin{remarks}
What makes~\eqref{eq:rankQR} a rank-revealing QR factorization is that the permutation $\Pi$ is computed to ensure that $R_{11}$ is as well-conditioned as possible and $R_{22}$ is as small as possible. While more formal statements of these conditions exist, we omit them here as they do not factor into our work.
\end{remarks}

Critically, any rank-revealing QR factorization yields a natural rank-$k$ approximation of $A$ with error
\begin{equation*}
    \| A - Q_1 
    \begin{bmatrix}
    R_{11} & R_{22}
    \end{bmatrix}
    \Pi^\top \|_2
    =
    \| R_{22} \|_2.
\end{equation*}
While finding the optimal rank-revealing QR factorization (\emph{i.e.,} minimizing the error for a given $k$) is closely related to a provably hard problem~\cite{civril2009selecting}, we find the original algorithm of Businger and Golub~\cite{businger1965linear} works well in practice. This routine is available in LAPACK~\cite{lapack,blas3QRCP}, can be easily incorporated into existing code, and has computational complexity $\mathcal{O}(nmk)$ when run for $k$ steps.  

\paragraph{Computing interpolative decompositions}
Given a rank-revealing QR factorization, we can immediately construct an ID (a formal algorithmic statement is given in the appendix).
Let $\I\subset [m]$ be such that $A_{:,\I} = A \Pi_1$ and define the interpolation matrix
\begin{equation*}
    T = 
    \begin{bmatrix}
    I_k & R_{11}^{-1} R_{12}
    \end{bmatrix}
    \Pi^\top.
\end{equation*}
With the choice $A_{:,\I} = Q_1 R_{11}$ it follows that the error of the ID as defined by $\I$ and $T$ is 
$
    \|A - A_{:,\I}T\|_2
    = \|R_{22}\|_2 
$. 
Picking $k$ such that $\|R_{22}\|_2\leq \epsilon \|A\|_2$ yields the desired relative error. Notably, since $\kappa(A_{:,\I}) = \kappa(R_{11})$ and $T = 
\begin{bmatrix}
I_k & R_{11}^{-1} R_{12}
\end{bmatrix}
\Pi^\top$ 
the desired criteria for an ID map back to those of a rank-revealing QR factorization---if $R_{11}$ is well conditioned then the  basis we use for approximation is as well and entries of $T$ are not too large.
If $\sigma_{\max}(R_{22})$ is not much larger than $\sigma_{k+1}(A)$ we get near optimal approximation accuracy.

\paragraph{Accuracy of the matrix approximation}
A key feature of using a column-pivoted QR factorization to compute an ID is that it allows us to dynamically determine the approximation rank $k$ as a function of $\epsilon$. 
This can be accomplished by monitoring $\|R_{22}\|_2$ at each step of the column-pivoted QR algorithm. 
However, repeatedly computing $\|R_{22}\|_2$ is expensive and often unnecessary in practice. 
When using the algorithm by Businger and Golub~\cite{businger1965linear} the magnitude of the diagonal entries of $R$ are non-increasing and it is common to use $\lvert r_{k+1,k+1} / r_{1,1}\rvert$ as a proxy for 
$\|R_{22}\|_2/\|A\|_2$.
While formal bounds indicate the approximation may be loose in the worst case, it is effective in practice (see appendix Figure~\ref{fig:chosingk}) and once a candidate $k$ has been identified $\|R_{22}\|_2$ can be computed if desired to certify the result---if the accuracy is unacceptable $k$ can be increased until it is. 
In some settings it may be preferable to fix $k$ and simply accept whatever accuracy is achieved.

\section{Pruning with interpolative decompositions}
\label{sec:pruneID}
The core of our approach is a novel use of IDs to 
prune neural networks. Here we illustrate the scheme for a single fully connected layer and we extend the scheme to more complex layers (e.g., convolution layers) and deeper networks in Section~\ref{sec:extendid}.
Consider a simple two layer (one hidden layer) fully connected neural network $h_{FC} : \R^d \to \R^c$ of width $m$ defined as 
\begin{equation*}
\label{eq:1hiddenfc}
    h_{FC}(x; W, U)
    =
    U^\top g(W^\top x)
\end{equation*}
with hidden layer $W \in \R^{d \times m}$, output layer $U \in \R^{m \times c}$, and activation function $g$. 
We omit bias terms: they may be readily incorporated by adding a row to $W$ and suitably augmenting the data.

To prune the model we will use an \emph{unlabeled} pruning data set  $\{x_i\}^{n}_{i=1}$ with $x_i \in \R^d$.
Let $X \in \R^{d \times n}$ be the matrix such that $X_{:,i} = x_i$.
Preserving the action of the two layer network to accuracy $\epsilon > 0$ on the data with fewer neurons is synonymous with finding an $\epsilon$ accurate approximation
$h_{FC}(x;W,U) \approx h_{FC}(x; \widehat{W},\widehat{U})$ 
where $\widehat{W}$ has fewer columns than $W$.
We can do this by computing an ID of the activation output of the first layer.

Concretely, let $Z \in \R^{m \times n}$ be the first-layer output, i.e., $Z = g(W^\top X),$ and let $Z^\top \approx (Z^\top)_{:,\I}  T $
be a rank-$k$ ID of $Z^\top$
with $|\I| = k$ and interpolation matrix $T \in \R^{k \times m}$ that achieves accuracy $\epsilon$ as in Definition~\ref{def:ID} (note that this means $k$ is a function of $\epsilon$).
Because the activation function $g$ commutes with the sub-selection operator, if $Z \approx T^\top Z_{\I,:}$ then
\begin{align*}
    g(W^\top X) 
    \approx T^\top \left( g(W^\top X) \right)_{\I,:}
    =
    T^\top g\left( W_{:,\I}^\top X \right).
\end{align*}
Multiplying both sides by $U^\top$ now gives an approximation of the original network by a pruned one,
\begin{equation}
    h_{FC}\left(x; W, U \right) 
    =
    U^\top g(W^\top X) \approx \\
    h_{FC}\left(x; W_{:,\I}, T U \right)
    =
    U^\top T^\top g\left( W_{:,\I}^\top X \right).
    \label{eqn:PruneApprox}
\end{equation}
That is, the ID has pruned the network of width $m$ into a dense sub-network of width $k$ with $\widehat W \equiv W_{:,\I} \in \R^{d \times k}$ and $\widehat U \equiv T U \in \R^{k \times c}$.
Importantly, 
the SVD of $Z^\top$ cannot be used for this task since
it is not clear how to map the dominant left singular vectors back through the activation function $g$ to either a subset of the existing neurons or a small set of new neurons.
This makes use of the ID essential and we provide additional intuition for this scheme in the appendix, specifically in Figure ~\ref{fig:patches}.

%

\subsection{A generalization bound for the pruned network}
A key feature of our ID based pruning method is that it can be used to dynamically select the width of the pruned network to maintain a desired accuracy when compared with the full model. This allows us to provide generalization guarantees for the compressed network in terms of generalization of the full model and the accuracy of the ID. We state the results for a single hidden fully connected layer with scalar output (i.e., $c=1$) and squared loss. They can be extended to more complex networks (at the expense of more complicated dependence on the accuracy each layer is pruned to), more general Lipschitz continuous loss functions, and vector valued output. We defer all proofs to the supplementary material.

Assume $(x,y)\sim \cD$ where $x\in\R^d,$ $y\in\R,$ and the distribution $\cD$ is supported on a compact domain $\Omega_x\times\Omega_y$. We let $\mathcal{R}_0 = \mathbb{E}_{(x,y) \sim \cD}(\|({u}^\top g({W}^\top x)- y)\|^2)$ denote the true risk of the trained full model and $\mathcal{R}_p = \mathbb{E}_{(x,y) \sim \cD}(\|({\widehat{u}}^\top g({\widehat{W}}^\top x)- y)\|^2)$ be the risk of the pruned model. (Since $c=1$ we let $u\in\R^m$ denote the last layer.)
We also define the empirical risk $\widehat{\cR}_{ID}$ of approximating the full model with our pruned model as
\begin{equation*}
\label{eq:pruneRisk}
    \widehat{\cR}_{ID}=\frac{1}{n}\sum_{i=1}^n \left\lvert{u}^\top g({W}^\top x_i) - {\widehat{u}}^\top g({\widehat{W}}^\top x_i) \right\rvert^2,
\end{equation*}
where $\{x_i\}_{i=1}^n$ are $n$ i.i.d.\ samples from $\cD$ (note that we do not need labels for these samples). Using this notation, Theorem~\ref{thm:generalization} controls the generalization error of the pruned model.

\begin{theorem}[Single hidden layer FC]
\label{thm:generalization}
Consider a model $h_{FC}=u^\top g(W^\top x)$ with m hidden neurons and a pruned model $\widehat{h}_{FC}=\widehat{u}^\top g(\widehat{W}^\top x)$ constructed using an $\epsilon$ accurate ID with $n$ data points drawn i.i.d\ from $\cD.$ The risk of the pruned model $\mathcal{R}_p$ on a data set $(x,y) \sim D$ assuming $\cD$ is compactly supported on $\Omega_x\times\Omega$ is bounded by  
\begin{equation}
\label{eq:RiskDcomp}
    \mathcal{R}_p \leq \mathcal{R}_{ID} + \mathcal{R}_0+ 2  \sqrt{ \mathcal{R}_{ID}  \mathcal{R}_0},
\end{equation}
where $\mathcal{R}_{ID}$ is the risk associated with approximating the full model by a pruned one and with probability $1-\delta$ satisfies
\begin{align*}
    {\mathcal{R}}_{ID} 
    &\leq 
    \epsilon^2M+M(1+\|T\|_2)^2n^{-\frac{1}{2}} 
    &\left( \sqrt{2\zeta dm \log (dm)\log\frac{en}{\zeta dm \log (dm)}}+ \sqrt{\frac{\log (1/\delta)}{2}}\right).
\end{align*} 
Here, $M = \sup_{x\in\Omega_x} \|u\|_2^2 \| g(W^T x)\|_2^2$ and $\zeta$ is a universal constant that depends on $g$. 

\end{theorem}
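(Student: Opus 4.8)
The plan is to separate a purely deterministic ``triangle inequality'' reduction from a statistical uniform-convergence argument, and within the latter to first control the \emph{empirical} approximation error directly from the ID and then pay a generalization price. First I would write $\widehat{u}^\top g(\widehat{W}^\top x)-y=\big(\widehat{u}^\top g(\widehat{W}^\top x)-u^\top g(W^\top x)\big)+\big(u^\top g(W^\top x)-y\big)$, square, take the expectation over $(x,y)\sim\cD$, and bound the cross term by Cauchy--Schwarz. Writing $\mathcal{R}_{ID}:=\mathbb{E}_{(x,y)\sim\cD}\big[(u^\top g(W^\top x)-\widehat{u}^\top g(\widehat{W}^\top x))^2\big]$ for the population approximation risk, this yields $\mathcal{R}_p\le \mathcal{R}_{ID}+\mathcal{R}_0+2\sqrt{\mathcal{R}_{ID}\mathcal{R}_0}$, which is exactly \eqref{eq:RiskDcomp}; the finiteness of all quantities follows from compactness of $\Omega_x\times\Omega_y$ and continuity of $g$. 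It remains to bound $\mathcal{R}_{ID}$.

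To control $\mathcal{R}_{ID}$ I would first bound its empirical counterpart $\widehat{\cR}_{ID}$ using the accuracy of the ID. With $\widehat{W}=W_{:,\I}$, $\widehat{u}=Tu$, and the commutation $g((W^\top X)_{\I,:})=(g(W^\top X))_{\I,:}$, the vector of residuals over the pruning sample is $u^\top(Z-T^\top Z_{\I,:})$ where $Z=g(W^\top X)$, so $\widehat{\cR}_{ID}=\tfrac{1}{n}\|u^\top(Z-T^\top Z_{\I,:})\|_2^2\le\tfrac{1}{n}\|u\|_2^2\|Z-T^\top Z_{\I,:}\|_2^2$. Transposing the ID guarantee of Definition~\ref{def:ID} applied to $Z^\top$ gives $\|Z-T^\top Z_{\I,:}\|_2\le\epsilon\|Z\|_2$, and $\|Z\|_2^2\le\|Z\|_F^2=\sum_{i=1}^n\|g(W^\top x_i)\|_2^2\le n\sup_{x\in\Omega_x}\|g(W^\top x)\|_2^2$. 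Combining, $\widehat{\cR}_{ID}\le\epsilon^2\|u\|_2^2\sup_{x}\|g(W^\top x)\|_2^2=\epsilon^2 M$, which is the first term of the claimed bound on $\mathcal{R}_{ID}$.

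Next I would pass from $\widehat{\cR}_{ID}$ to $\mathcal{R}_{ID}$ by uniform convergence over a fixed class, since the pruned model is itself data-dependent. On $\Omega_x$ each per-example loss is bounded by $B:=M(1+\|T\|_2)^2$, because $|u^\top g(W^\top x)|\le\sqrt M$ and $|\widehat{u}^\top g(\widehat{W}^\top x)|=|u^\top T^\top g(W_{:,\I}^\top x)|\le\|T\|_2\|u\|_2\|g(W^\top x)\|_2\le\|T\|_2\sqrt M$. Let $\mathcal{G}$ be the class of functions $x\mapsto(u^\top g(W^\top x)-v^\top g(V^\top x))^2$, truncated to $[0,B]$, with $V$ a column sub-selection of $W$ and $\|v\|_2\le\|T\|_2\|u\|_2$; this is a subclass of squared differences of two one-hidden-layer networks of total width $O(m)$ carrying $O(dm)$ trainable parameters, and standard bounds on the pseudo-dimension of one-hidden-layer networks with activation $g$ give $\mathrm{Pdim}(\mathcal{G})\le\zeta\,dm\log(dm)$ for a constant $\zeta=\zeta(g)$. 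A Massart/Dudley covering-number bound then gives $\mathbb{E}\sup_{\mathcal{G}}(\mathcal{R}-\widehat{\mathcal{R}})\le Bn^{-1/2}\sqrt{2\,\mathrm{Pdim}(\mathcal{G})\log(en/\mathrm{Pdim}(\mathcal{G}))}$, and since changing one pruning sample perturbs $\sup_{\mathcal{G}}(\mathcal{R}-\widehat{\mathcal{R}})$ by at most $B/n$, McDiarmid's inequality promotes this to: with probability $\ge 1-\delta$, $\mathcal{R}_{ID}-\widehat{\cR}_{ID}\le Bn^{-1/2}\big(\sqrt{2\zeta dm\log(dm)\log\frac{en}{\zeta dm\log(dm)}}+\sqrt{\tfrac{1}{2}\log(1/\delta)}\big)$. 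Adding $\widehat{\cR}_{ID}\le\epsilon^2 M$ from the previous step produces the stated bound on $\mathcal{R}_{ID}$, and combining with the first step completes the proof.

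The first two steps are routine. The delicate part is the third: one must fix a function class that provably contains the data-dependent pruned network while keeping the uniform per-example bound at $M(1+\|T\|_2)^2$ (this is where a priori magnitude control on $T$ from the rank-revealing QR is convenient), and one must invoke a pseudo-dimension estimate that is $O(dm\log(dm))$ \emph{uniformly} over all $\binom{m}{k}$ candidate supports, so that the metric entropy does not degrade with the number of sub-selections. Matching the constants and the nested logarithmic factors to the precise form stated is the main bookkeeping burden.
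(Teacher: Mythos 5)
Your proposal is correct and follows essentially the same route as the paper: the same add-and-subtract plus Cauchy--Schwarz decomposition giving $\mathcal{R}_p \leq \mathcal{R}_{ID} + \mathcal{R}_0 + 2\sqrt{\mathcal{R}_{ID}\mathcal{R}_0}$, the same bound $\widehat{\cR}_{ID} \leq \epsilon^2 M$ from the ID accuracy (the paper's Lemma~\ref{lem:IDEmperical}), and the same uniform-convergence step with per-example loss bounded by $M(1+\|T\|_2)^2$ and pseudo-dimension $\zeta\, dm\log(dm)$ (the paper's Lemma~\ref{lem:prunedRisk}, which simply cites a standard pseudo-dimension generalization theorem rather than re-deriving it via Massart/Dudley and McDiarmid as you do). Your explicit treatment of the data-dependence of the selected support and of the Frobenius-norm step $\|Z\|_2^2 \le n\sup_x\|g(W^\top x)\|_2^2$ is, if anything, slightly more careful than the paper's write-up.
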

Theorem~\ref{thm:generalization} is developed by considering the ID as a learning algorithm applied to the output of the full model using unlabeled pruning data. This allows us to control the risk of the pruned model in terms of the risk of the original model and the additional risk introduced by the ID. Importantly, here we can control the additional risk in terms of the empirical risk of the ID and an additive term that decays as additional pruning data is used. Lemma~\ref{lem:prunedRisk} codifies this decomposition.

\begin{lemma}
\label{lem:prunedRisk}
Under the assumptions of Theorem~\ref{thm:generalization}, for any $\delta\in(0,1)$, $\cR_{ID}$ satisfies  
\begin{equation*}
    \mathcal{R}_{ID} 
    \leq \widehat{\cR}_{ID} + M(1+\|T\|_2)^2n^{-\frac{1}{2}} 
     \left( \sqrt{2p\log(en/p)}+ 2^{-\frac{1}{2}}\sqrt{\log (1/\delta)}\right)
\end{equation*}
with probability $1-\delta,$ where $M = \sup_{x\in\Omega_x} \|u\| ^2 \| g(W^T x)\|^2$ and $p=\zeta dm \log (dm)$ for some universal constant $\zeta$ that depends only on the activation function.
\end{lemma}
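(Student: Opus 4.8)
The plan is to treat the interpolative decomposition as a learning rule that, applied to the fixed full model $h_{FC}(x)=u^\top g(W^\top x)$ and the $n$ unlabeled samples, returns a pruned network $\widehat{h}_{FC}(x)=\widehat{u}^\top g(\widehat{W}^\top x)$ belonging to a \emph{fixed} hypothesis class $\mathcal{H}$: single hidden layer fully connected networks with activation $g$, scalar output, and at most $m$ hidden units (a pruned net of width $k\le m$ lies in $\mathcal{H}$ after padding with zero neurons). Since the full model is fixed, $\cR_{ID}$ and $\widehat{\cR}_{ID}$ are exactly the population and empirical (over $x_1,\dots,x_n$) means of the single function $f(x)=\lvert h_{FC}(x)-\widehat{h}_{FC}(x)\rvert^2$, which lies in the class $\mathcal{F}=\{\,x\mapsto\lvert h_{FC}(x)-h'(x)\rvert^2 : h'\in\mathcal{H}\,\}$. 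Because $\widehat{h}_{FC}$ is computed from the same samples that define $\widehat{\cR}_{ID}$ it cannot be treated as a fixed hypothesis, so I would prove a one-sided uniform-convergence bound $\sup_{f\in\mathcal{F}}\big(\mathbb{E}f-\tfrac1n\sum_i f(x_i)\big)\le(\cdots)$ and then specialize it to the realized $f$, for which $\mathbb{E}f=\cR_{ID}$ and $\tfrac1n\sum_i f(x_i)=\widehat{\cR}_{ID}$.

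First I would establish a range bound on $\mathcal{F}$. Using $\widehat{u}=Tu$, $\widehat{W}=W_{:,\I}$, and that $g$ commutes with sub-selection, one has $\widehat{h}_{FC}(x)=u^\top T^\top\big(g(W^\top x)\big)_{\I}$, so by Cauchy--Schwarz, the triangle inequality, $\|T^\top\|_2=\|T\|_2$, and the fact that sub-selection is a contraction in the $2$-norm,
\[
  \big\lvert h_{FC}(x)-\widehat{h}_{FC}(x)\big\rvert
  =\big\lvert u^\top\!\big(g(W^\top x)-T^\top(g(W^\top x))_{\I}\big)\big\rvert
  \le \|u\|\,(1+\|T\|_2)\,\|g(W^\top x)\| ,
\]
so $0\le f(x)\le M(1+\|T\|_2)^2=:B$ for every $x\in\Omega_x$, where compactness of $\Omega_x$ is what makes $M=\sup_{x\in\Omega_x}\|u\|^2\|g(W^\top x)\|^2$ finite. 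Since $\|T\|_2$ is itself data-dependent I would apply this after truncating $\mathcal{F}$ at level $B$, using that the rank-revealing QR construction bounds $\|T\|_2$ a priori by a quantity depending only on $m$ and $k$; the realized $f$ is unchanged by the truncation.

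Next I would bound the complexity of $\mathcal{F}$. A single hidden layer, input-dimension-$d$, scalar-output fully connected network of width at most $m$ has $O(dm)$ real parameters, so the standard pseudo-dimension bounds for neural networks give $\mathrm{Pdim}(\mathcal{H})=O(dm\log(dm))$ with a constant depending only on $g$; translating every function in $\mathcal{H}$ by the fixed function $h_{FC}$ leaves the pseudo-dimension unchanged, and post-composing with the fixed map $t\mapsto t^2$ inflates it by at most a constant factor, so $\mathrm{Pdim}(\mathcal{F})\le p=\zeta dm\log(dm)$ for a universal $\zeta=\zeta(g)$. (Exploiting that $\widehat{W}$ is a column subset of the fixed $W$ would even give $\mathrm{Pdim}(\mathcal{F})=O(m)$, but the coarser bound, which matches the complexity term in Theorem~\ref{thm:generalization}, is all that is needed.)

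Finally I would feed the range $B$ and pseudo-dimension $p$ into the classical uniform-convergence theorem for $[0,B]$-valued function classes --- covering-number bound of order $(en/p)^p$, symmetrization, then McDiarmid's inequality (each summand has range $B$, so changing one sample moves the supremum by at most $B/n$) --- to obtain, with probability at least $1-\delta$, $\sup_{f\in\mathcal{F}}\big(\mathbb{E}f-\tfrac1n\sum_i f(x_i)\big)\le B\big(\sqrt{2p\log(en/p)/n}+\sqrt{\log(1/\delta)/(2n)}\big)$, which is exactly the asserted bound with $B=M(1+\|T\|_2)^2$. The main obstacle is the complexity step: one must commit to the correct fixed hypothesis class (width-$\le m$ fully connected nets, not something tied to the random set $\I$) and invoke a pseudo-dimension bound valid for a possibly non-piecewise-linear activation $g$ --- which is where the activation-dependent constant $\zeta$ enters --- while checking that differencing with $h_{FC}$ and squaring do not degrade the $\sqrt{p\log(en/p)/n}$ rate, so I would argue through pseudo-dimension and $L^\infty$ covering numbers rather than a Rademacher-contraction argument that would cost an extra factor of $B$. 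The range bound and the uniform-convergence machinery are otherwise routine, the one wrinkle being the bookkeeping that makes the data-dependent $B=M(1+\|T\|_2)^2$ appear in the final statement, handled via the truncation above.
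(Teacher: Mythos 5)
Your proposal follows essentially the same route as the paper: treat the ID as a learning algorithm fit to the full model's outputs, bound the loss range by $M(1+\|T\|_2)^2$ (the paper's Lemma~\ref{lem:eta}), take the neural-network pseudo-dimension bound $p=\zeta dm\log(dm)$, and plug these into the standard pseudo-dimension uniform-convergence bound (the paper simply cites Theorem 11.8 of \cite{foundationsML} rather than re-deriving it via covering numbers, symmetrization, and McDiarmid). Your extra care about the data-dependence of $\|T\|_2$ and about the pseudo-dimension of the shifted-and-squared loss class goes beyond what the paper spells out, but the argument is the same in substance.
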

\begin{remarks}
We believe that the second part of the bound in Lemma~\ref{lem:prunedRisk} is likely loose since it relies on a pseudo-dimension bound for fully connected neural networks. However, when pruning with an ID we only consider subsets of existent neurons and it is plausible that in this setting the upper bound for the pseudo-dimension could be improved.
\end{remarks}

Crucially, an immediate consequence of using an ID for pruning is that we can explicitly control $\cR_{ID}$ in terms of the accuracy parameter. This relation between the ID accuracy and empirical risk is given in Lemma~\ref{lem:IDEmperical} and is what allows us to express the risk of the pruned network in Theorem~\ref{thm:generalization}. 

\begin{lemma}
\label{lem:IDEmperical} Following the notation of Theorem~\ref{thm:generalization}, an ID pruning to accuracy $\epsilon$ yields a compressed network that satisfies
$\widehat{\cR}_{ID} \leq  \epsilon^2 \|u\|_2^2 \| g(W^T X)\|_2^2 / n,$
where $X\in\R^{d\times n}$ is a matrix whose columns are the pruning data.
\end{lemma}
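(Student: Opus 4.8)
The plan is to unwind the definition of $\widehat{\cR}_{ID}$ into a matrix expression built from the ID residual, and then invoke the ID accuracy guarantee directly. First I would recall that, writing $Z = g(W^\top X)\in\R^{m\times n}$, the $i$-th column of $Z$ is exactly $g(W^\top x_i)$; since $g$ acts entrywise it commutes with row sub-selection (the same fact used in~\eqref{eqn:PruneApprox}), so $g(\widehat{W}^\top x_i) = g(W_{:,\I}^\top x_i) = \bigl(g(W^\top x_i)\bigr)_{\I} = Z_{\I,i}$. Using $\widehat{u} = Tu$ (the $c=1$ instance of $\widehat{U} = TU$), each summand of $\widehat{\cR}_{ID}$ becomes
\[
  u^\top g(W^\top x_i) - \widehat{u}^\top g(\widehat{W}^\top x_i)
  = u^\top Z_{:,i} - (Tu)^\top Z_{\I,i}
  = u^\top\bigl(Z - T^\top Z_{\I,:}\bigr)_{:,i}.
\]

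Next I would set $E = Z - T^\top Z_{\I,:}$ and collect the sum into matrix form,
\[
  \widehat{\cR}_{ID}
  = \frac{1}{n}\sum_{i=1}^n \bigl\lvert u^\top E_{:,i}\bigr\rvert^2
  = \frac{1}{n}\,\lVert E^\top u\rVert_2^2
  \le \frac{1}{n}\,\lVert E\rVert_2^2\,\lVert u\rVert_2^2,
\]
where the last step is submultiplicativity of the spectral norm together with $\lVert E^\top\rVert_2 = \lVert E\rVert_2$. It then remains only to bound $\lVert E\rVert_2$: transposing, $E^\top = Z^\top - (Z^\top)_{:,\I} T$ is precisely the residual of the $\epsilon$-accurate ID of $Z^\top$ in the sense of Definition~\ref{def:ID}, so $\lVert E\rVert_2 = \lVert Z^\top - (Z^\top)_{:,\I} T\rVert_2 \le \epsilon\lVert Z^\top\rVert_2 = \epsilon\lVert g(W^\top X)\rVert_2$. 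Substituting yields $\widehat{\cR}_{ID} \le \epsilon^2\lVert u\rVert_2^2\lVert g(W^\top X)\rVert_2^2/n$, as claimed.

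I do not expect a genuine obstacle; this is essentially a bookkeeping argument. The only points that need care are (i) the commutation of $g$ with sub-selection, and (ii) keeping the transposes straight so that the ID accuracy bound---stated for $Z^\top$---is applied to the correct matrix. I would remark that the identical argument handles vector-valued output by replacing $\lvert\cdot\rvert$ with $\lVert\cdot\rVert_2$ and $\lVert u\rVert_2$ with $\lVert U\rVert_2$.
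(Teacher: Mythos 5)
Your proposal is correct, and it reaches the bound by the same basic route as the paper: substitute $\widehat{W}=W_{:,\I}$, $\widehat{u}=Tu$, use the commutation of $g$ with sub-selection to recognize the ID residual, and invoke Definition~\ref{def:ID}. The difference is where the accuracy guarantee is applied. The paper argues term-by-term, bounding each summand by $\epsilon\,\|u\|\,\|g(W^\top x_i)\|$ before summing; you instead collect the sum into the single quantity $\tfrac{1}{n}\lVert E^\top u\rVert_2^2$ with $E = Z - T^\top Z_{\I,:}$ and apply the spectral-norm ID bound once. Your aggregation is actually the cleaner way to land on the stated inequality: the $\epsilon$-accurate ID controls the residual matrix in the spectral norm, not each column relative to its own norm, so the paper's per-column step is loose as written (it would more naturally pair with a Frobenius-type bound, $\sum_i\|E_{:,i}\|_2^2=\|E\|_F^2$), whereas your argument yields exactly $\widehat{\cR}_{ID}\le \epsilon^2\|u\|_2^2\|g(W^\top X)\|_2^2/n$ with the spectral norm and makes explicit which norm appears in the final statement. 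Your closing remark about the vector-valued case (replacing $\|u\|_2$ by $\|U\|_2$) is also consistent with the paper's claim that the result extends to vector outputs.
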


\section{Convolutional and deep networks}
\label{sec:extendid}
\subsection{Convolution layers}

To prune convolution layers with the ID at the channel level we reshape the 
output tensor into a matrix where each column represents a single output channel.
After this transformation, the key idea is the same as in Section~\ref{sec:pruneID}.
Consider a simple two layer convolution neural network defined as 
$
    h_{\Conv}(x; W, U)
    =
    \Conv(U, g( \Conv(W, x))))
$, 
with the convolution-layer operator $\Conv$, weight tensors $W$ and $U$,
and activation function $g$.
The kernel dimension, size, stride, padding, and dilation do not change the form of the ID at the output channel level.
Let $Z = g(\Conv(W, X))$ be the activation output of the first layer with unlabeled pruning data $X$, and define $\Reshape$ as the operator which reshapes a tensor into a matrix with the output channels as columns, i.e., $\Reshape(Z)\in\R^{n_i\times m_c}$ where $m_c$ is the number of channels and $n_i$ is the product of all other dimensions (e.g. in the case of a 2d convolution, $n_i$ would be $\text{width} \times \text{height} \times \text{number of examples}$). 

We now compute\footnote{When $n_i$ is large we can appeal to randomized ID algorithms~\cite{martinsson2011randomized}, or the TSQR~\cite{ballard2014tsqr}.
} a rank-$k$ ID
$\Reshape(Z) \approx \Reshape(Z)_{:,\I} T.$
The activation function $g$ and reshape operator both commute with the sub-selection operator, so
\begin{align*}
    \Reshape(g(\Conv(W, X)))
    &\approx
    \Reshape(g(\Conv(W, X)))_{:,\I} T \\
    &=
    \Reshape(g(\Conv(W_{\I,\ldots}, X))) T,
\end{align*}
where $W_{\I,\ldots}$ denotes an indexing sub-selection of $W$ along its output-channel dimension. 
Next, we need to propagate this $T$ into the next layer, which we can do with a ``matrix multiply'' by the next-layer's weights along its input channel dimension: we call this operation $\Matmul$.\footnote{If $T \in \R^{m \times n}$ and $U$ is a weight-tensor with $n$ input channels, then to compute $\Matmul(T,U)$ we: (1) reshape $U$ to be an $n \times p$ matrix for some $p$, (2) multiply the reshaped matrix by $T$, producing a $m \times p$ matrix, and (3) reshape the result back to a tensor with $m$ input channels and all other dimensions the same as $U$.}
With this, a little algebraic manipulation of our approximate equality above gives
\begin{equation}
    \Conv(U, g( \Conv(W, X)))) \\
    \approx 
    \Conv(\Matmul(T,U), g( \Conv(W_{\I,\ldots}, X)))),
\end{equation}
and so if we set $\widehat U = \Matmul(T,U)$ and $\widehat W = W_{\I,\ldots}$,
we can preserve the action of the two layer network with fewer channels as
$
    h_{\Conv}(x; W, U)
    \approx
    h_{\Conv}(x; \widehat{W}, \widehat{U}).
$ 
This gives us a recipe for pruning convolution layers analogous to (\ref{eqn:PruneApprox}).
This recipe can be directly applied to a composition of a convolution layer followed by a pooling layer~\cite{goodfellow2016deep} (or any other layer that acts independently by channel) by treating the conv layer/ pooling layer pair as a single convolution layer with a ``fancy'' activation function $g$: we just run the ID on the output post-pooling, and use that to sub-select the convolution layer's weights.
Flatten layers, for connecting to FC layers, are equally straightforward.

\subsection{Deep networks}
\label{sec:deepid}

\begin{algorithm}[t]{\small
\caption{Pruning a multilayer network with interpolative decompositions}
\begin{algorithmic}[1]
\label{alg:deepID}
\REQUIRE
Neural net $h(x; W^{(1)},\ldots,W^{(L)})$,
layers to not prune $S \subset [L]$,
pruning set $X$,
pruning fraction $\alpha$
\ENSURE
Pruned network $h(x; \widehat{W}^{(1)},\ldots,\widehat{W}^{(L)})$
\vspace{0.5em}
\STATE $T^{(0)} \gets I$ \;
\FOR{$l \in \{1 \dots L\}$}
\STATE $Z \gets h_{1:l}(X; W^{(1)}, \dots, W^{(l)})$
\COMMENT{layer l activations}
\IF{layer $l$ is a FC layer}
\STATE $(\I, T^{(l)}) \gets \operatorname{ID}(Z^T; \alpha) \textbf{ if } l \notin S \textbf{ else } (:, I)$ \;
\STATE $\widehat{W}^{(l)} \gets T^{(l-1)} W^{(l)}_{:,\I}$
\COMMENT{sub-select neurons, multiply T of prev layer's ID}
\ELSIF{layer l is a Conv layer (or Conv+Pool)}
\STATE $(\I, T^{(l)}) \gets \operatorname{ID}(\Reshape(Z); \alpha) \textbf{ if } l \notin S \textbf{ else } (:, I)$ \;
\STATE $\widehat{W}^{(l)} \gets \Matmul(T^{(l-1)}, W^{(l)}_{\I,\ldots})$
\COMMENT{select channels; multiply T} 
\ELSIF{layer l is a Flatten layer}
\STATE $T^{(l)} \gets T^{(l-1)} \otimes I \,\,$ 
\COMMENT{expand T to have the expected size}
\ENDIF
\ENDFOR
\end{algorithmic}
}\end{algorithm}

The ID pruning primitives for fully connected and convolution layers can now be composed together to prune deep networks.
Algorithm~\ref{alg:deepID} specifies how we chain together the fully connected and convolution primitives to prune feedforward networks, for simplicity we assume for the moment we know the desired layer sizes.
A multi-layer neural network is pruned from the beginning to the end, where the ID is used to approximate the outputs of the original network.
The ID pruning primitives sub-select neurons (or channels) in the current layer and propagate the corrective interpolation matrix to the next layer.
There are many ways one could prune a multi-layer network with these ID pruning primitives;
we selected the approach in Algorithm~\ref{alg:deepID} through empirical observations (though we do not assert that it is optimal).
Note that as a pre-processing step before running Algorithm~\ref{alg:deepID}, batch normalization layers~\cite{batchnorm} should be absorbed into their preceding fully-connected or convolution layers, and dropout~\cite{srivastava2014dropout} layers should be removed.

\paragraph{Iterative Pruning}
While Algorithm~\ref{alg:deepID} is illustrative, in practice we would often like to be able to either specify a desired accuracy or choose layer sizes optimally for a desired compression ratio. Our approach allows us to accomplish this by iteratively selecting layers to compress. We introduce a score function for layers that is the ratio of the estimated relative error $\lvert r_{k+1,k+1} / r_{1,1}\rvert$ introduced by compressing a layer to the number of flops $f_l$ that would be cut if we pruned a layer $l$ to size $k$. We call this score $s_l(k)= \lvert r_{k+1,k+1} / r_{1,1}\rvert /f_l$ and it is heuristic for the compressability of each layer---lower scores imply a layer is easier to compress. However, different layers of the network are connected, and compressing a layer early in the network can effect how well later layers can be compressed.  
Therefore, we prune the network iteratively, measuring the score for each layer at a given pruning percentage (or step size), choosing the layer with the lowest score, pruning it, and then re-calculating the scores for the later layers. 
We repeat the process until the network reaches a desired compression, or until the network performance degrades unacceptably. 
We refer to this method as Iterative ID, and refer to cutting a constant fraction of all neurons in each layer as Constant Fraction ID.  
For full details see Appendix~\ref{app:sec:iterativeID} and Algorithm~\ref{alg:deepIDIter}.

\section{Evaluating compression beyond accuracy}
\label{sec:correlation}
An important benefit of our approach is that we are actually able to preserve the original model's predictions better than other methods.
Traditional pruning methods typically do a poor job at preserving the original predictions, due to their heavy reliance on fine tuning that effectively retrains the model.
Here we explain why we might care about preserving per-example predictions beyond top-line accuracy.
We argue that in many situations compression methods must well-approximate the original model, and that accuracy is a poor metric for this use case.


Consider a pretrained model $M$, a resulting ``compressed'' model $M_C$, and an evaluation set $(X,Y)$.
Accuracy measures the similarity between the true labels $Y$ and the predicted labels from the compressed model $M_C(X)$.
This metric does not directly compare the original model $M$ and compressed model $M_C$.
As we have seen in Section~\ref{sec:intro}, a compressed model can recover the accuracy of the original model, but still differ widely on the predictions.
Instead our model correlation measures the similarity between the original model's predictions $M(X)$ and the compressed model's predictions $M_C(X)$.
One can think of this metric as an ``accuracy to the original predictions'', instead of an accuracy to the true labels.

We propose model correlation as the percent of test example predictions two models agree on---details of this metric are discussed in Appendix~\ref{app:sec:correlation}.
Model correlation is a general metric to measure the similarity between the learned functions 
of two models.
Our claim is that by better preserving a model's per-example decisions, we can better preserve special properties of the model.
In the following section we provide experimental evidence for this claim.
Models are now often trained to have properties that go beyond test accuracy---for example robustness to adversarial attacks, sub-class classification accuracy, fairness, etc., and this measure of model correlation is likely to correlate with many of these criteria.
Note that we do not believe preserving per-example decisions ``boosts'' any of these properties, we are simply preserving properties of the baseline model.


\section{Experiments}
\label{sec:experiments}

The Appendix provides a list of all methods we compare with (including references) and a lookup table to find the relevant experiments for each method (i.e., which figures and tables they appear in). 

\paragraph{More careful evaluation of pruning methods}
We propose a range of evaluations that goes beyond simple post-fine tuning test accuracy on standard vision benchmarks.
First is our proposed model correlation metric, discussed in detail in Section~\ref{sec:correlation}.
While certain benchmark problems are well studied and the literature has characterized layer sizes for more efficient networks, that cannot be relied upon for realistic problems practitioners want to solve. Therefore, we demonstrate the usability of our method by applying it to a non-standard problem and automatically discover a significantly more efficient allocation of flops per layer.
We also consider extensive pre-fine tuning evaluations.
In many cases, we may produce results before fine tuning that are sufficiently strong to remove the need for it. Moreover, fine tuning often washes out differences between different methods and reduces model correlation (particularly if the learning rate is too high for too many epochs).
We show that by preserving model correlation our method is able to preserve the sub-class accuracy of a class which has been removed from the pruning and fine tuning set.
Lastly, we show how maintaining  network structure allows us to compose our methods with other matrix-decomposition based techniques and provide post-fine tuning results.  One hidden layer results can be found in the Appendix.

\paragraph{Broad applicability}
We demonstrate the versatility of our method on atypical architectures and beyond vision tasks using the Atom3D Ligand Binding Affinity (LBA) \cite{atom3d} benchmark. This simulate a more typical use case for pruning than benchmark vision models. The LBA task predicts the binding strength between proteins and small molecules which is useful for drug discovery. We train and prune the 3DConv network from \cite{atom3d}, which uses 3-D convolutional layers, and achieves a RMSE of 
1.42.
We use the same method to achieve a baseline RMSE of 1.43, and prune the network using our Iterative ID. The baseline network uses 12.8G FLOPs per example. For comparison VGG-16 on Imagenet uses 30G FLOPs. Figure~\ref{fig:mobilenet_atom3d} shows that Iterative ID is able to prune 95\% of the total FLOPS without any significant change in the accuracy of the network and does not require any fine tuning.  



\begin{figure}[h]
\centering
\begin{subfigure}{.49\textwidth}
  \includegraphics[width=\linewidth]{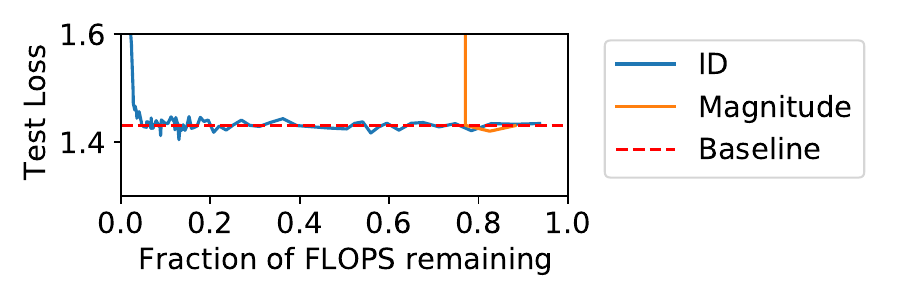} 
\end{subfigure}
\begin{subfigure}{.49\textwidth}
  \includegraphics[width=\linewidth]{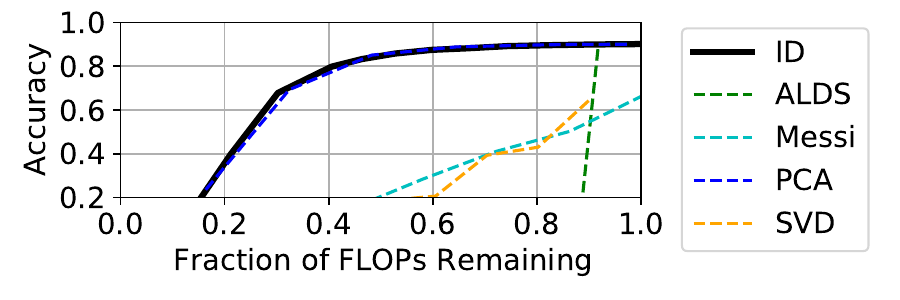} 
\end{subfigure}
\caption{Pre-fine tuning pruning results on a 3D-Conv network for Atom3D (left) and MobileNet V1 for CIFAR-10 (right). 
We see that ID either matches or outperforms other methods.}
\label{fig:mobilenet_atom3d}
\end{figure}

\paragraph{Pre-fine tuning results}
We analyze the performance of ID compression against other pruning methods using the CIFAR-10~\cite{datacifar10} and Imagenet~\cite{deng2009imagenet} data sets with VGG16~\cite{simoyan2015vgg} and Mobilenet V1~\cite{MobilenetMainPaper}.  
The ID uses a held-out pruning set of 1000 data points on CIFAR-10 and 5000 data points on Imagenet.
Full hyper-parameter details can be found in the Appendix and code. 
Figures~\ref{fig:mobilenet_atom3d} and~\ref{fig:vgg16preft} illustrate results before fine tuning for structured and unstructured methods.
We see that on CIFAR-10 and ImageNet, Iterative ID matches or outperforms other methods which do not use any global fine tuning.
We even dominate unstructured methods which are typically more parameter efficient.
Many matrix decomposition based methods struggle to compress the depth-wise-separable convolution layers in the Mobilenet V1 network.
Moreover, Iterative ID is able to choose better per-layer sizes than the default VGG16 configuration on CIFAR-10. 
In the appendix, we show that the choice of how much to prune per iteration is not particularly sensitive, and demonstrate that 
pruning up to 35\% of the FLOPs results in no degradation to the pre-fine tuning accuracy.
Results for MobileNet V1 on ImageNet can be found in the Appendix.

\begin{figure}[h!]
\centering
\begin{subfigure}{.49\textwidth}
  \includegraphics[width=0.95\linewidth]{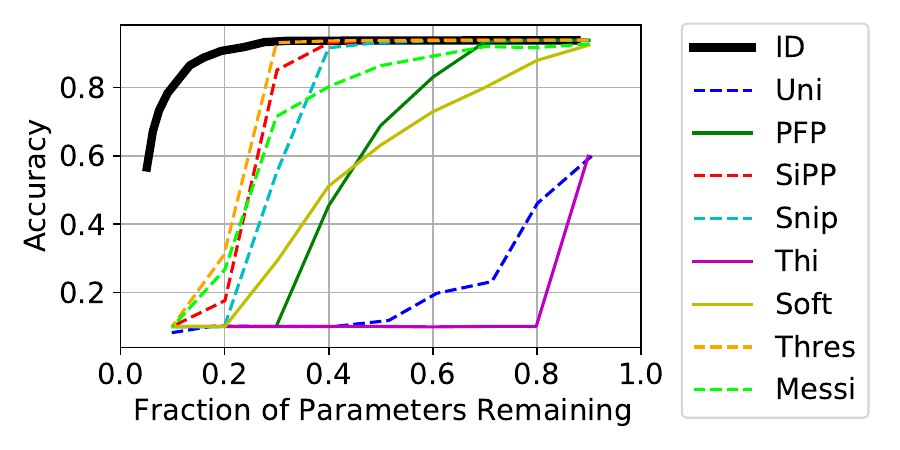} 
  \label{fig:vggpreft}
\end{subfigure}
\begin{subfigure}{.49\textwidth}
  \includegraphics[width=0.95\linewidth]{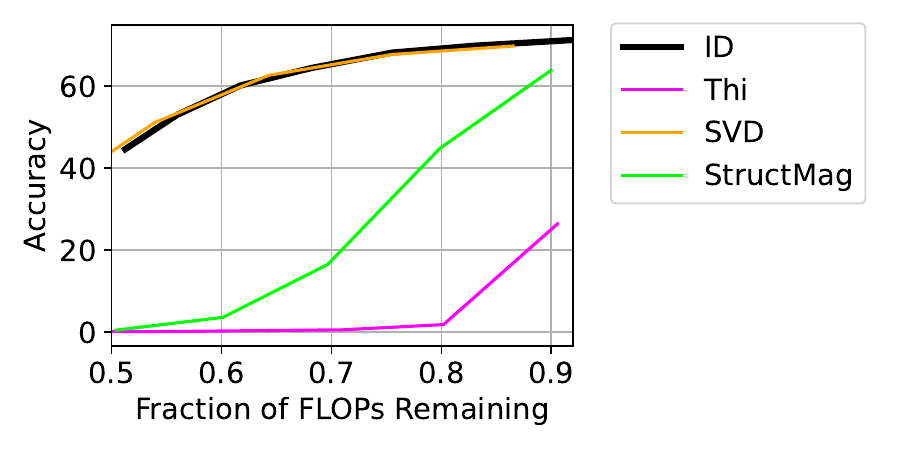} 
  \label{fig:metrics}
\end{subfigure}%
    \caption{
    Pre-fine tuning pruning results on VGG-16 for CIFAR-10 (left) and ImageNet (right).
    ID either matches or outperforms other methods, including unstructured methods (dotted lines) which are typically more parameter efficient.
    Solid lines are structured.
    Note the ``SVD'' method 
    changes the structure of the network. 
    Model correlation results can be found in the Appendix.}
\label{fig:vgg16preft}
\end{figure}

\paragraph{Model correlation as a proxy for preserving fairness}

We conduct an experiment to demonstrate the connection between model correlation and a simple fairness metric of per-class accuracy.  We begin with a VGG16 model $M$ trained on all 10 classes of Cifar10.  It performs reasonably well on all 10 classes, so we consider it ``fair''.  We then remove a class from the pruning set to simulate an under-represented class (but leave it in the train and test sets). We then prune $M$ using our ID-based method, creating a compressed model $M_{id}$---we do not perform any fine tuning. Sub-class accuracy for each class can be seen in figure \ref{fig:id_sensitivity_class} as well as the correlation of $M_{id}$ with respect to $M$. Despite not having access to an entire class of data, we can prune the model to 50\% FLOPs while only losing a few percent accuracy on the missing class and retaining reasonably high model correlation. As a comparison we prune the model $M$ with magnitude pruning to create a compressed model $M_m$. The magnitude pruned model loses almost all accuracy on the missing class when we prune to 50\%, and its model correlation is initially only 10\%. We fine tune with the abbreviated data set in an attempt to recover accuracy (figure \ref{fig:mag_sensitivity_class} shows the correlation and sub-class accuracies). While $M_m$ recovers on the represented classes, it fails to recover any accuracy for the unrepresented class during fine tuning.

\paragraph{Model architecture preservation and composition of methods }

\begin{figure}[h]
    \centering
    \includegraphics[width=0.9\linewidth]{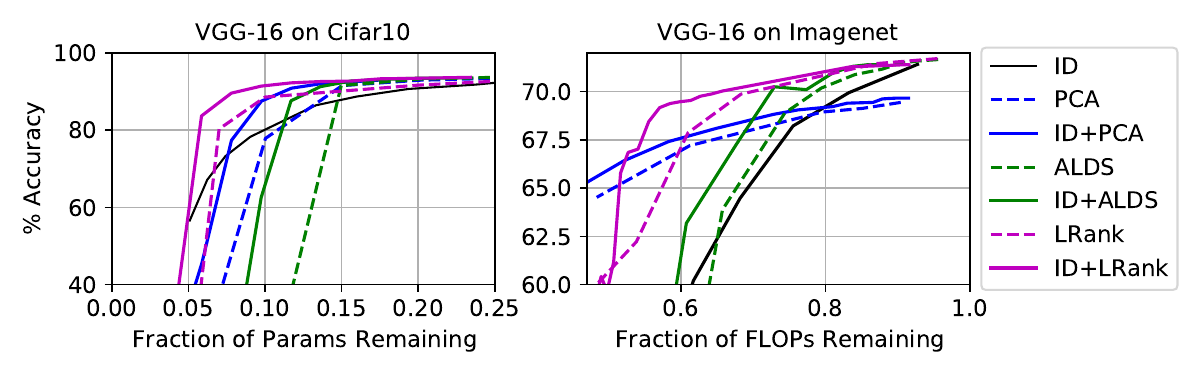} 
    \caption{
    Composing ID with other compression methods for VGG-16 on CIFAR-10~(left) and ImageNet~(right).  
    For each ID+compression method, the original network was pruned using ID until the correlation degraded on the pruning set. A second method was then applied to the compressed network. 
    This composition produced
    smaller and more accurate networks than either method alone.}
    \label{fig:combiningID}
\end{figure}

Because our method preserves network structure and only reduces layer sizes it can be easily combined with other methods, see Figure~\ref{fig:combiningID}.  
We first apply Iterative ID
, pruning until the the model correlation begins to drop on the pruning set.
Then we apply a secondary compression method.
This composition is easy because we are simply replacing the original model with an ID-pruned model that has the same architecture.  The secondary algorithms we use insert extra layers with different computational structure into the network, so composing them with each other is non-trivial. In each example we see that pruning first with ID and then applying a secondary algorithm boosts the performance of the secondary algorithm.


\paragraph{Post-global-fine tuning results}
In Tables \ref{tab:cifarVgg}, and~\ref{tab:vggImgNet} we demonstrate the effectiveness of our method for VGG-16 on CIFAR-10 and ImageNet after moderate amounts of global fine tuning, both alone and when combined with other methods. 
Hyperparameter details can be found in the Appendix. 
Networks pruned with ID attain competitive accuracies while having a much higher prediction correlation with their original models.
This evidence suggests that we have truly compressed the network rather than retraining a new one.
Other methods often do not correlate much more than an independently trained model.
Surprisingly, we find that in some settings combining ID with other matrix decomposition methods like LRank\cite{idel2020lrank} can be competitive without any global fine tuning (Tables~\ref{tab:cifarVgg},~\ref{tab:vggImgNet}).

We also find that ID pruning and then globally fine-tuning a MobileNet V1 architecture results in an accuracy of 89.4\% on Cifar-10 and a correlation of 92.6\% with the original model after 60 epochs at a 42\% FLOPs reduction, as opposed to a model trained from scratch which achieves an accuracy of 88.9\% and a correlation of 88.6\% after 120 epochs.  

\begin{table}[ht!]
    \centering
    \begin{tabular}{cccccc}
        Method & FLOPs Reduction (\%) & Accuracy & $\Delta$ & Correlation& Epochs  \\ 
        \midrule

        ID & 50 & 93.31 &0.30 & {97.29} & 60\\
        ID+LRank &50&93.43&0.31&95.9&{0}\\
        \hline
        HRank & 54 & 93.43 & -0.50 & --- & 480 \\
        Polar & 54&93.92&0.04 & --- & 200\\
        NS &51& 93.62&{-0.26}&---&200\\
        Mag &50&93.56&-0.04&93.15&200\\
        \hline
        Independent & 0 & 93.60& --- & 93.02&160 \\
    \end{tabular}
    \vspace{0.2em}
    \caption{Post-fine tuning VGG-16 results on CIFAR-10. ID results are averaged over 5 trials, with a standard deviation of $\pm 0.36$. ``Independent'' is an independently trained model. 
    In the 50\% FLOPs region, ID produces the most correlated model, while ID+LRank produces a model with reasonable accuracy without any global fine tuning.
    $\Delta\equiv$ original accuracy - pruned accuracy.
    }
    \label{tab:cifarVgg}
\end{table}
\vspace{-1em}



\begin{table}[h!]
    \centering
    \begin{tabular}{cccccc}
        Method & FLOPs Reduction (\%) & Accuracy & $\Delta$ & Correlation & Epochs  \\
        \midrule
        ID+LRank & 43 & 68.84 & 2.67 & 85.66 & 0\\
        ID+LRank & 43 & 70.50 & 1.01 & 90.90 & 10 \\
        ID+PCA & 54 & 69.18 & 2.33 & 87.13 & 200 \\
        \hline
        LRank(\cite{liebenwein2021alds}-impl.) & 43 & 70.30 & 1.21 & 90.38 & 10 \\
        ThiNet & 46 & 66.59 & 0.89 & 70.15 & 30\\
        AMC & 48 & 70.49 & 0.65 &76.03&120\\
        CP & 53 & 68.20 & 2.66 & 78.47 & 10\\
    \end{tabular}
    \vspace{0.2em}
    \caption{
    Post-fine tuning VGG-16 results on ImageNet.  
    We list methods which release both the original and compressed model, so that we can compute the correlation.
    ID composed methods produce competitive accuracies at a range of FLOPs reductions, and produce better correlated models, even without any fine tuning.
    }
    \label{tab:vggImgNet}
\end{table} 

\vspace{-2em}
\section{Limitations and future work}
\label{sec:dis}
Our method comes with several limitations and possible extensions for future work.  
While preserving model correlation suggests that we are likely to preserve sub-class loss, our theory does not currently extend to that regime and requires that the unlabeled pruning set be from the same distribution as the test set. More broadly, we do not yet fully understand how trainable the models produced by ID are in different conditions and we cannot make claims about how compressable a given model will be. In future work we will explore modifying training process to improve prunability, which is a common approach~\cite{huang2018sss,yang2020hoyer,liu2017netslim,zhuang2020polar}.  
We will also explore ways to refine our iterative pruning approach to work with
more complicated architectures. Of particular note, our method typically exposes a sizable ``free FLOPs" regime and we explore how this can be leveraged more broadly.

\vspace{-1em}
\begin{ack}
\vspace{-1em}
This work was partially funded by the National Science Foundation under awards
DMS-1830274, DGE-1922551, and NSF CAREER Award 2046760.
\end{ack}

{
\small
\bibliographystyle{plainnat}
\bibliography{id_for_nn.bib}
}

\newpage
\appendix
\section*{Appendix}
\appendix

\section{Notation}

Matrices are denoted with capital letters such as $A$, and vectors with lower case $a$.  In situations where we partition a matrix into pieces, the partitions will be referred to as $A_{ij}$.  Individual entries in a matrix will be referred to as lower case letters with two subscripts, $a_{ij}$.  
$\sigma_k(A)$ denotes the $k$-th leading singular value of $A$, and $\kappa(A)$ the condition number.
For a matrix $A\in\R^{n\times m}$ we let $A_{\mathcal{J},\I}$ denote a sub-selection of the matrix $A$ using sets $\mathcal{J}\subset [n]$ to denote the selected rows and $\I\subset [m]$ to denote the selected columns; $:$ denotes a selection of all rows or columns.

\section{Fixed-rank interpolative decompositions}

As stated in Section~\ref{sec:ID}, a formal algorithmic statement is given for computing fixed-rank interpolative decompositions.
    
\begin{algorithm}[ht]
\begin{algorithmic}
\label{alg:genericID}
\REQUIRE
matrix $A \in \R^{n \times m}$, rank-$k$
\ENSURE
interpolative decomposition $A_{:,\I} T$
\STATE Compute column-pivoted QR factorization 
\[
    A
    \begin{bmatrix}
    \Pi_1 & \Pi_2
    \end{bmatrix}
    =
    \begin{bmatrix}
    Q_1 & Q_2
    \end{bmatrix}
    \begin{bmatrix}
    R_{11} & R_{12} \\
    & R_{22}
    \end{bmatrix}.
\]
where
$\Pi_1 \in \R^{m \times k}$, 
$R_{11} \in \R^{k \times k}$, 
$R_{12} \in \R^{k \times (m-k)}$, 
and remaining dimensions as required.
\;
\STATE $A_{:,\I} \gets A \Pi_1$ \;
\STATE $T \gets 
\begin{bmatrix}
I_k & R_{11}^{-1} R_{12}
\end{bmatrix}
\Pi^\top$\;
\end{algorithmic}
\caption{Interpolative Decomposition}
\end{algorithm}

\section{Proofs}

\begin{customthm}{\ref{thm:generalization}}
Consider a model $h_{FC}=u^\top g(W^\top x)$ with m hidden neurons and a pruned model $\widehat{h}_{FC}=\widehat{u}^\top g(\widehat{W}^\top x)$ constructed using an $\epsilon$ accurate ID with $n$ data points drawn i.i.d\ from $\cD.$ The risk of the pruned model $\mathcal{R}_p$ on a data set $(x,y) \sim D$ assuming $\cD$ is compactly supported on $\Omega_x\times\Omega$ is bounded by  
\begin{equation*}
    \mathcal{R}_p \leq \mathcal{R}_{ID} + \mathcal{R}_0+ 2  \sqrt{ \mathcal{R}_{ID}  \mathcal{R}_0},
\end{equation*}
where $\mathcal{R}_{ID}$ is the risk associated with approximating the full model by a pruned one and with probability $1-\delta$ satisfies
\begin{equation*}
    {\mathcal{R}}_{ID} \leq \epsilon^2M+M(1+\|T\|_2)^2n^{-\frac{1}{2}} \left( \sqrt{2\zeta dm \log (dm)\log\frac{en}{\zeta dm \log (dm)}}+ \sqrt{\frac{\log (1/\delta)}{2}}\right).
\end{equation*} 
Here, $M = \sup_{x\in\Omega_x} \|u\|_2^2 \| g(W^T x)\|_2^2$ and $\zeta$ is a universal constant that depends on $g$. 

\end{customthm}

\begin{proof}
We can write the risk for this network as
\begin{equation*}
    \mathcal{R}_p=\mathbb{E}(\|\widehat{u}^\top g(\widehat{W}^\top x)- y\|^2),
\end{equation*} and adding and subtract the original network yields

\begin{equation*}
\begin{aligned}
    \mathbb{E}(\|\widehat{u}^\top g(\widehat{W}^\top x)- y\|^2) &= \mathbb{E}(\|(\widehat{u}^\top g(\widehat{W}^\top x)-u^\top g (w^\top x))+(u^\top g (w^\top x)- y)\|^2)\\
    &\leq \mathbb{E}((\|(\widehat{u}^\top g(\widehat{W}^\top x)-u^\top g (w^\top x))\|+\|(u^\top g (w^\top x)- y)\|)^2)\\
   &\leq \mathbb{E}(\|(\widehat{u}^\top g(\widehat{W}^\top x)-u^\top g (w^\top x))\|^2)\\
   &\phantom{\leq}+ \mathbb{E}(2\|(\widehat{u}^\top g(\widehat{W}^\top x)-u^\top g (w^\top x))\|\|(u^\top g (w^\top x)- y)\|)\\
   &\phantom{\leq}+\mathbb{E}(\|(u^\top g (w^\top x)- y)\|^2)\\
   &\leq  \mathcal{R}_{ID} + 2  \sqrt{ \mathcal{R}_{ID}  \mathcal{R}_0}+ \mathcal{R}_0.
\end{aligned}
\end{equation*}

Now, we bound $\mathcal{R}_{ID}$ by 
considering the interpolative decomposition to be a learning algorithm learning the function $u^\top g(W^\top X)$.
Specifically, we use Lemma \ref{lem:prunedRisk} to bound $\mathcal{R}_{ID}$ as  
\begin{equation*}
    \mathcal{R}_{ID} \leq \widehat{\cR}_{ID} + M(1+\|T\|_2)^2n^{-\frac{1}{2}}\left( \sqrt{2p\log(en/p)}+ 2^{-\frac{1}{2}}\sqrt{\log (1/\delta)}\right).
\end{equation*}
where p is the pseudo-dimension. We can then use Lemma \ref{lem:IDEmperical} to bound the empirical risk of the interpolative decomposition as  
\begin{equation*}
    \widehat{\cR}_{ID} \leq  \epsilon^2 \|u\|_2^2 \| g(W^T X)\|_2^2 / n,
\end{equation*}
and it follows that
\begin{equation*}
    \widehat{\cR}_{ID} \leq \epsilon^2 \sup_{x\in\Omega_x} \|u\|_2^2 \| g(W^T x)\|_2^2.
\end{equation*}

\end{proof}

\begin{customlemma}{\ref{lem:prunedRisk}}
Under the assumptions of Theorem~\ref{thm:generalization}, for any $\delta\in(0,1)$, $\cR_{ID}$ satisfies  
\begin{equation*}
    \mathcal{R}_{ID} \leq \widehat{\cR}_{ID} + M(1+\|T\|_2)^2n^{-\frac{1}{2}}\left( \sqrt{2p\log(en/p)}+ 2^{-\frac{1}{2}}\sqrt{\log (1/\delta)}\right)
\end{equation*}
with probability $1-\delta,$ where $M = \sup_{x\in\Omega_x} \|u\| ^2 \| g(W^T x)\|^2$ and $p=\zeta dm \log (dm)$ for some universal constant $\zeta$ that depends only on the activation function.
\end{customlemma}
\begin{proof} 
Considering the interpolative decomposition as a learning algorithm to learn $u^\top g(W^\top X)$, we can use Theorem 11.8 in~\cite{foundationsML} to bound the risk on the data distribution. Given a maximum on the loss function $\eta$, and the ReLU activation function,

\begin{equation*}
    \mathcal{R}_{ID} \leq \widehat{\mathcal{R}}_{ID} + \frac{\eta}{n^{1/2}}( \sqrt{2p\log en/p}+ 2^{-1/2} \sqrt{\log (1/\delta)})
\end{equation*}
with probability $(1-\delta)$.\footnote{e is the base of the natural log.} Here, the constant $\eta$ is bounded by Lemma~\ref{lem:eta}. Bartlett et al.~\cite{pmlr-v65-harvey17a}  show that the p-dimension for a ReLU network is $O(\mathcal{W}Llog(\mathcal{W}))$ where $\mathcal{W}$ is the number of weights and L is the number of layers.   Here, that translates to $p=\zeta dm \log(dm)$ for some constant $\zeta$ that depends only on the choice of activation function.  
\end{proof}

\begin{customlemma}{\ref{lem:IDEmperical}}

Following the notation of Theorem~\ref{thm:generalization}, an ID pruning to accuracy $\epsilon$ yields a compressed network that satisfies
\begin{equation*}
    \widehat{\cR}_{ID} \leq  \epsilon^2 \|u\|_2^2 \| g(W^T X)\|_2^2 / n,
\end{equation*}
where $X\in\R^{d\times n}$ is a matrix whose columns are the pruning data.
\end{customlemma}

\begin{proof}
\begin{equation}
    \hat{\mathcal{R}}_{ID}=\frac{1}{n}\sum_{i=1}^n |u^\top g(W^\top x_i) - \widehat{u}^\top g(\widehat{W}^\top x_i) |^2
\end{equation}
Here, we can appeal to our definition of the ID to bound each term in the sum.    

\begin{equation}
    |u^\top g(W^\top x_i) - \widehat{u}^\top g(\widehat{W}^\top x_i) | =|u^\top g(W^\top x_i) - {u}^\top  T^\top g(P^\top {W}^\top x_i) |
\end{equation}
By our definition of an $\epsilon$-accurate interpolative decomposition, 

\begin{equation}
    |u^\top g(W^\top x_i) - \widehat{u}^\top g(\widehat{W}^\top x_i) | \leq  \epsilon \|u\| \| g(W^T x_i)\|
\end{equation}

Therefore, 

\begin{equation}
    \hat{\mathcal{R}}_{ID} \leq \frac{1}{n} \epsilon^2 \|u\| ^2 \| g(W^T X)\|^2
\end{equation}
\end{proof}

\begin{lemma}
\label{lem:eta}
The maximum $\eta$ of the loss function associated with approximating the full network with the pruned on is bounded as   
\[
    \eta \leq \sup_{x\in\Omega_x} \|u\|_2^2 \| g(W^T x)\|_2^2 \|(1+ \|T\|))^2
\]

\end{lemma}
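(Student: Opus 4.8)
The plan is to bound the loss function $\ell(x,y) = |\widehat u^\top g(\widehat W^\top x) - u^\top g(W^\top x)|^2$ pointwise over the support $\Omega_x$, since this ``loss'' measures how well the pruned network approximates the full one and does not depend on $y$ at all. First I would rewrite the pruned network's output using the structure produced by the ID: recall from Section~\ref{sec:pruneID} that $\widehat W = W_{:,\I}$ and $\widehat u = Tu$ (equivalently $\widehat u^\top = u^\top T^\top$), and that the sub-selection can be written as $W_{:,\I}^\top x = P^\top W^\top x$ for the appropriate selection matrix $P$. Thus $\widehat u^\top g(\widehat W^\top x) = u^\top T^\top g(P^\top W^\top x)$, and the per-point loss becomes $|u^\top g(W^\top x) - u^\top T^\top g(P^\top W^\top x)|^2$.

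Next I would apply Cauchy--Schwarz to pull out $\|u\|_2$, giving $|u^\top(g(W^\top x) - T^\top g(P^\top W^\top x))| \le \|u\|_2 \, \|g(W^\top x) - T^\top g(P^\top W^\top x)\|_2$. Then I would bound the remaining norm by the triangle inequality: $\|g(W^\top x) - T^\top g(P^\top W^\top x)\|_2 \le \|g(W^\top x)\|_2 + \|T^\top g(P^\top W^\top x)\|_2 \le \|g(W^\top x)\|_2 + \|T\|_2 \|g(P^\top W^\top x)\|_2$. Since $P^\top W^\top x$ just selects a subset of the coordinates of $W^\top x$, we have $\|g(P^\top W^\top x)\|_2 \le \|g(W^\top x)\|_2$ (here using that $g$ applied coordinatewise, e.g. ReLU, only drops or shrinks entries — more carefully, the vector $g(P^\top W^\top x)$ is a sub-vector of $g(W^\top x)$). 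Combining, the norm is at most $(1 + \|T\|_2)\|g(W^\top x)\|_2$, so the squared loss is at most $\|u\|_2^2 \|g(W^\top x)\|_2^2 (1+\|T\|_2)^2$. Taking the supremum over $x \in \Omega_x$ yields the claimed bound $\eta \le \sup_{x\in\Omega_x} \|u\|_2^2 \|g(W^\top x)\|_2^2 (1+\|T\|_2)^2$.

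The main subtlety — and the one place the argument relies on a property of $g$ rather than pure linear algebra — is the step $\|g(P^\top W^\top x)\|_2 \le \|g(W^\top x)\|_2$. For ReLU this is immediate because $g(P^\top v)$ literally equals $P^\top g(v)$ (the nonlinearity commutes with coordinate selection, as already noted in Section~\ref{sec:pruneID}), and $P^\top g(v)$ is a sub-vector of $g(v)$, hence has no larger Euclidean norm. More generally this holds for any activation with $g(0)=0$ applied coordinatewise, which covers the settings of interest. The rest is routine: Cauchy--Schwarz, the triangle inequality, and submultiplicativity of the spectral norm. I would also remark that the compact support assumption on $\cD$ guarantees the supremum is finite, so $\eta$ is a legitimate finite constant that can be fed into the pseudo-dimension generalization bound (Theorem~11.8 of~\cite{foundationsML}) used in Lemma~\ref{lem:prunedRisk}.
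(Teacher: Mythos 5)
Your proof is correct and follows essentially the same route as the paper's: rewrite the pruned output as $u^\top T^\top g(P^\top W^\top x)$, apply the triangle inequality and Cauchy--Schwarz, use that $g$ commutes with coordinate sub-selection so $\|g(P^\top W^\top x)\|_2 \le \|g(W^\top x)\|_2$, and take the supremum over $\Omega_x$. If anything, your write-up makes explicit the sub-selection step and the $g(0)=0$ assumption that the paper's version passes over quickly, so no changes are needed.
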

\begin{proof} 
\begin{equation*}
    \eta=\max_{x, W, u} \| u^\top g(W^\top x ) - \widehat{ u}^\top g (\widehat{W}^\top x) \|^2
\end{equation*}
For any $x\in\Omega_x$ we have the bound 
\begin{equation*}
\begin{aligned}
    \| u^\top g(W^\top x ) - \widehat{ u}^\top g (\widehat{W}^\top x) \|^2 &\leq (\|u^\top g(W^\top x) \| + \|\widehat{ u}^\top g (\widehat{W}^\top x) \|)^2\\
     &\leq (\|u^\top g(W^\top x) \| + \|{u}^\top  T^\top g(P^\top {W}^\top x)\|)^2\\
     &\leq (\|u^\top g(W^\top x)\|(1+ \|T\|))^2.\\
\end{aligned}
\end{equation*}
Therefore, 
\begin{equation*}
    \eta \leq \sup_{x\in\Omega_x} \|u\|_2^2 \| g(W^T x)\|_2^2 \|(1+ \|T\|))^2
\end{equation*}

\end{proof}

\begin{remarks}
We can explicitly measure the norm of the interpolation matrix $T$ that appears in the upper bound of Lemma~\ref{lem:prunedRisk}.  Moreover, we expect this to be small because there exists an interpolation matrix such that  $t_{ij} \leq 2 \forall \{i,j\}$~\cite{liberty2007randomized}. The better the interpolation matrix, the better the bound.    
\end{remarks}

\section{Correlation between random trials}
\label{app:sec:correlation}
We introduce a metric that we call "model correlation" in order to evaluate different pruning methods.  We define the correlation between two models on a data set as the percent of labels that the two models agree on, irrespective of if those labels agree with the ground truth.  There are situations in which the user of a network may care about more than just the simple accuracy --- it may matter which items a network is most likely to get wrong, and how.  It is also possible for two networks to have the same accuracy but perform very differently on subsets of the dataset.  We include this metric after fine tuning to demonstrate the efficacy of our compression method relative to methods that necessitate extensive fine tuning and, therefore, may not correlate well with the original model.  Here we provide some baseline measurements of what affects model correlation, in order to better understand this metric.  

Our baseline measurement of model correlation is the model correlation between two same-sized but randomly initialized networks trained using the same hyper parameters but with different (random) data orders.  We first test the effects for a fully connected one hidden layer network on FashionMNIST. 

\begin{table}[h!]
\begin{center}
 \begin{tabular}{||c c c c c||} 
 \hline
Size & Same Initialization & Same Data Order & Vary Both & Accuracy\\ [0.5ex] 
 \hline\hline
 500 & 94.6 & 94.0 & 93.3 & 89.21 \\ 
 \hline
 2000 & 96.7 & 94.7 & 94.5& 89.63\\
 \hline
 4000 & 97.3 & 95.4 & 95.1 &89.74\\
 \hline
\end{tabular}
\caption{Correlation data for a single hidden layer fully connected network on the FashionMNIST data set. We keep the same initialization but vary the data order (Same Initialization), use the same data order but vary the initialization (Same Data Order) or vary both the data order and initialization (Vary both). This data is averaged over 9 trials.  We see that starting at the same initialization increases the correlation at the end of training, and, interestingly, that using the same data order during training can increase the correlation as well. }
\end{center}
\end{table}
We continue our experiments on the CIFAR10 dataset using the VGG-16 architecture.  We find that two differently randomly initialized models trained using different data orders to  the same state-of-the-art accuracy (93.6\%) agree on classifications 93.0\% of the time.  The effect of data order is also seen on CIFAR10 VGG-16 networks.  When we prune to 50\% FLOPS and then re-train a VGG-16 network using magnitude pruning, if the data order is the same as the original network, then the correlation is 94.01\%.  However, using a different data order the correlation is 93.15\%. The model correlation breaks down quickly when we use a large learning rate (.1 for 200 epochs).  

\section{Comparison methods}
\label{app:sec:comparison_methods}
Here we provide a key for the various methods we compare to in the main text, along with their classifications within our taxonomy. 
We give both the paper citation and implementation citation.

\begin{table}[h!]
\label{tab:citations}
\begin{center}
 \begin{tabular}{c c c c c} 
 \toprule
 \textbf{Dense} & \textbf{Structure Preserving} & \textbf{Corrects Next Layer} & \textbf{No Local FT} & \\ 
 \midrule
 Name & Citation & Implementation & Table & Figures \\ 
 \midrule
 ID & (Ours) &(Ours) & All & All\\
 PFP & \citet{liebenwein2020provable} &\citet{liebenwein2020provable}& - & \ref{fig:vgg16preft}\\ 
 AMC & \citet{he2018amc} & \citet{he2018amc} & \ref{tab:vggImgNet} & -\\
 \midrule
 \\\\
 \toprule
 \textbf{Dense}&\textbf{Structure Preserving}& \textbf{Corrects Next Layer}&\textbf{Local FT}&\\ 
 \midrule
 Name & Citation & Implementation & Table & Figures \\ 
 \midrule
 Thi & \citet{luo2017thinet} & \citet{liebenwein2020provable} & \ref{tab:vggImgNet} & \ref{fig:vgg16preft}\\ 
 CP & \citet{he2017feat} & \citet{he2017feat} &  \ref{tab:vggImgNet} & -\\ 
 NS & \citet{liu2017netslim} &\citet{zhuang2020polar} & \ref{tab:cifarVgg} & -\\ 
 \midrule
 \\\\
 \toprule
 \textbf{Dense}&\textbf{Structure Preserving}&\textbf{No Correction}&&\\
 \midrule
 Name & Citation & Implementation & Table & Figures \\ 
 \midrule
 Uni & Uniform Random Filter Pruning &\citet{liebenwein2020provable}&  - & \ref{fig:vgg16preft}\\ 
 Soft & \citet{softnetHe} &\citet{liebenwein2020provable}&  - & \ref{fig:vgg16preft}\\ 
 StructMag &\citet{li2017l1} & \citet{liebenwein2020provable} & \ref{tab:cifarVgg} & \ref{fig:vgg16preft}\\ 
 FPGM & \citet{he2019fpgm} &(Ours)&  \ref{tab:cifarVgg} & -\\
 {HRank} & ~\citet{lin2020hrank}  &~\citet{lin2020hrank} &  \ref{tab:cifarVgg} & -\\ 
 \midrule
 \\\\
 \toprule
 \textbf{Dense}&\textbf{Extra Layers}&&&\\
 \midrule
 Name & Citation & Implementation & Table & Figures \\ 
 \midrule
 ALDS & \citet{liebenwein2021alds} &\citet{liebenwein2021alds}&  - & \ref{fig:mobilenet_atom3d},\ref{fig:combiningID} \\ 
 Messi & \citet{Maalouf2021DeepLM} &\citet{liebenwein2021alds}&  - & \ref{fig:mobilenet_atom3d},\ref{fig:vgg16preft}\\ 
 PCA & \citet{zhang20153dfilter} &\citet{liebenwein2021alds}&  - &\ref{fig:mobilenet_atom3d}, \ref{fig:combiningID}\\ 
 SVD & \citet{denten2014svd} &\citet{liebenwein2021alds}&  - & \ref{fig:mobilenet_atom3d}, \ref{fig:vgg16preft}\\ 
 LRank & \citet{idel2020lrank} &\citet{liebenwein2021alds} &  \ref{tab:cifarVgg},\ref{tab:vggImgNet}& \ref{fig:combiningID}\\ 
 Polar & \citet{zhuang2020polar}&\citet{zhuang2020polar}&  \ref{tab:cifarVgg} & -\\
 \midrule
 \\\\
 \toprule
 \textbf{Sparse}&&&&\\
 \midrule
 Name & Citation & Implementation & Table & Figures \\ 
 \midrule
 SiPP & \citet{sippBayal}&\citet{sippBayal} &  - & \ref{fig:vgg16preft}\\ 
 Snip & \citet{lee2019snip} &\citet{sippBayal}& - & \ref{fig:vgg16preft}\\
 Thres & \citet{li2017l1} &\citet{liebenwein2020provable}&  - & \ref{fig:vgg16preft}\\ 

\hline
 \end{tabular}

 \end{center}
       \caption{
    Taxonomy of pruning and methods and look-up table for references.  As you go further down the list, methods tend to become more different from our own.  
    }
 \end{table}

\section{Setting $k(\epsilon)$ per layer in deep networks with iterative pruning}
\label{app:sec:iterativeID}
By Definition~\ref{def:ID}, an $\epsilon$-accurate interpolative decomposition is associated with a number $k$ of selected columns.
We observe that for deep networks it is not straightforward to apply a single accuracy $\epsilon$ to the entire network.
Figure \ref{fig:vggMet} illustrates the representative variety in the layer-wise singular value decay for a trained VGG-16 model.
For our method a sharper singular value decay indicates greater prunability. 
However, not all layers contribute equally to the number of FLOPS and the number of parameters.  Typically convolutional layers contribute disproportionately to the number of FLOPs compared to the number of parameters they contain.
When we prune neurons or channels in a layer, that has an effect on the number of FLOPS performed by that layer, and also in the next layer.  Therefore, we iteratively prune the network by finding the layer that will allow us to prune the most FLOPs compared to the error that we expect from pruning that layer, and pruning that layer.  This is given in Algorithm \ref{alg:deepIDIter}.  

\begin{algorithm}[t]{\small
\caption{Pruning a multilayer network with iterative interpolative decompositions}
\begin{algorithmic}
\label{alg:deepIDIter}
\REQUIRE
Neural net $h(x; W^{(1)},\ldots,W^{(L)})$,
pruning set $X$,
step size $\lambda$,
FLOPs ratio $\rho$
\ENSURE
Pruned network $h(x; \widehat{W}^{(1)},\ldots,\widehat{W}^{(L)})$
\vspace{0.5em}
\FOR{$l \in \{1, \dots, L\}$}
\STATE $\widehat{W}^{(l)} \gets W^{(l)}$
\ENDFOR
\STATE $F \gets \operatorname{Compute\_FLOPs}(h(x; \widehat{W}^{(1)},\ldots,\widehat{W}^{(L)})$
\STATE $F_\rho \gets F * \rho$
\WHILE{$F > F_\rho$}
\STATE $S, K \gets \{\}$
\FOR{$l \in \{1, \dots, L\}$}
\STATE $Z \gets h_{1:l}(X; W^{(1)}, \dots, W^{(l)})$
\COMMENT{layer l activations}
\STATE $R \gets \operatorname{Pivot\_QR}(\operatorname{Reshape}(Z))$
\COMMENT{reshape if Conv layer}
\STATE $k \gets \operatorname{num\_channel}(\widehat{W}^{(l)} \times \lambda$
\COMMENT{calculates proportion of channels or neurons to potentially remove}
\STATE $Err \gets |R[k+1,k+1]/R[1,1]|$
\COMMENT{error from ID approximation}
\STATE $F_l \gets \operatorname{Compute\_FLOPs}(\widehat{W}^{(l)}, \widehat{W}^{(l+1)})$
\COMMENT{compute FLOPs of current and next layer}
\STATE $S.\operatorname{append}(Err / F_l)$
\COMMENT{weighted layer prunability score}
\STATE $K.\operatorname{append}(k)$
\ENDFOR
\STATE $l \gets \operatorname{argmin}(S)$
\STATE $\widehat{W}^{(l)} \gets \operatorname{ID}$ prune layer $l$ to $K[l]$ neurons or channels
\STATE $F \gets \operatorname{Compute\_FLOPs}(h(x; \widehat{W}^{(1)},\ldots,\widehat{W}^{(L)})$
\ENDWHILE
\end{algorithmic}
}\end{algorithm}

\begin{figure}[!tbp]
\centering

  \includegraphics[width=.25\linewidth]{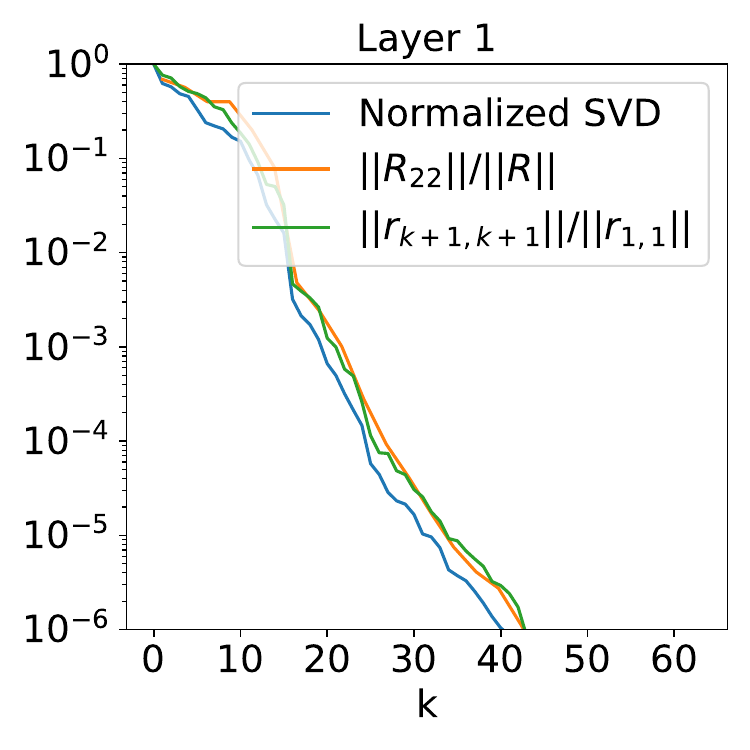}
  \includegraphics[width=.25\linewidth]{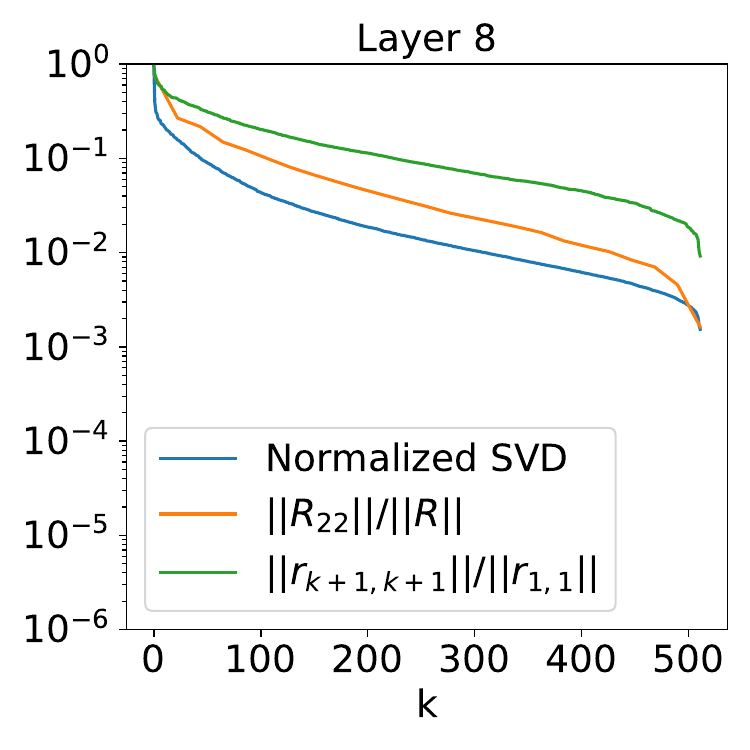}
  \includegraphics[width=.25\linewidth]{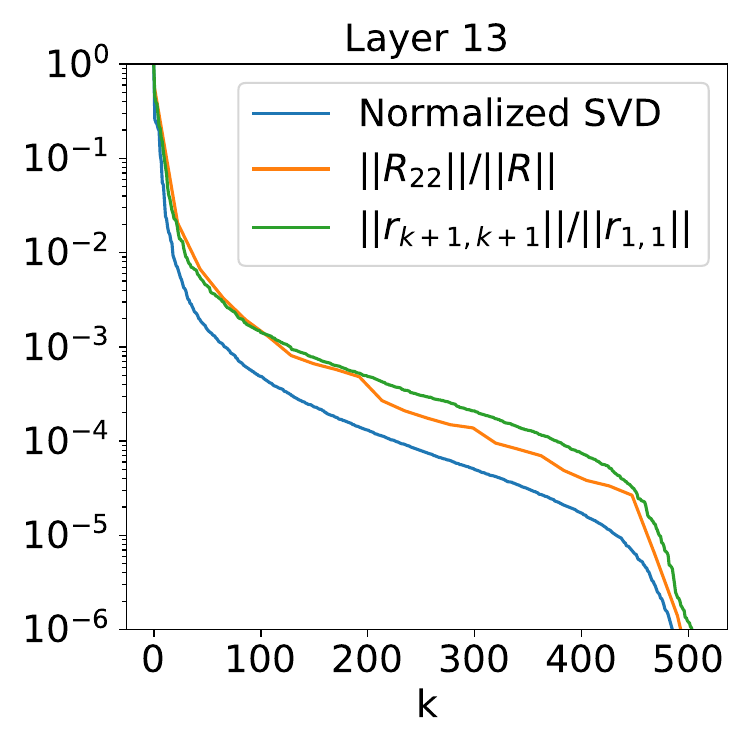}

    \caption{Metrics for different layers in VGG-16 for Cifar-10.  The leftmost layer is the first convolutional layer.  The center figure is one of the middle convolutional layers, and the rightmost figure is the last convolutional layer.  As we can see, the singular value decay varies throughout the network. }
\label{fig:vggMet}
\end{figure}
\section{Sensitivity of parameters}
\label{sec:sens}
When we prune using Iterative ID, we have a choice of hyperparameter in how what percent of channels to cut per iteration (pruning fraction $\alpha$ in Algorithm~\ref{alg:deepIDIter}). In Figure~\ref{fig:sensitivity}, we demonstrate that the choice of pruning fraction does not greatly affect the accuracy of the network after iterative pruning.  However, using a smaller $\alpha$ results in the network taking significantly more time to prune.  

\begin{figure}[h!]
    \centering
    \includegraphics[width=\linewidth]{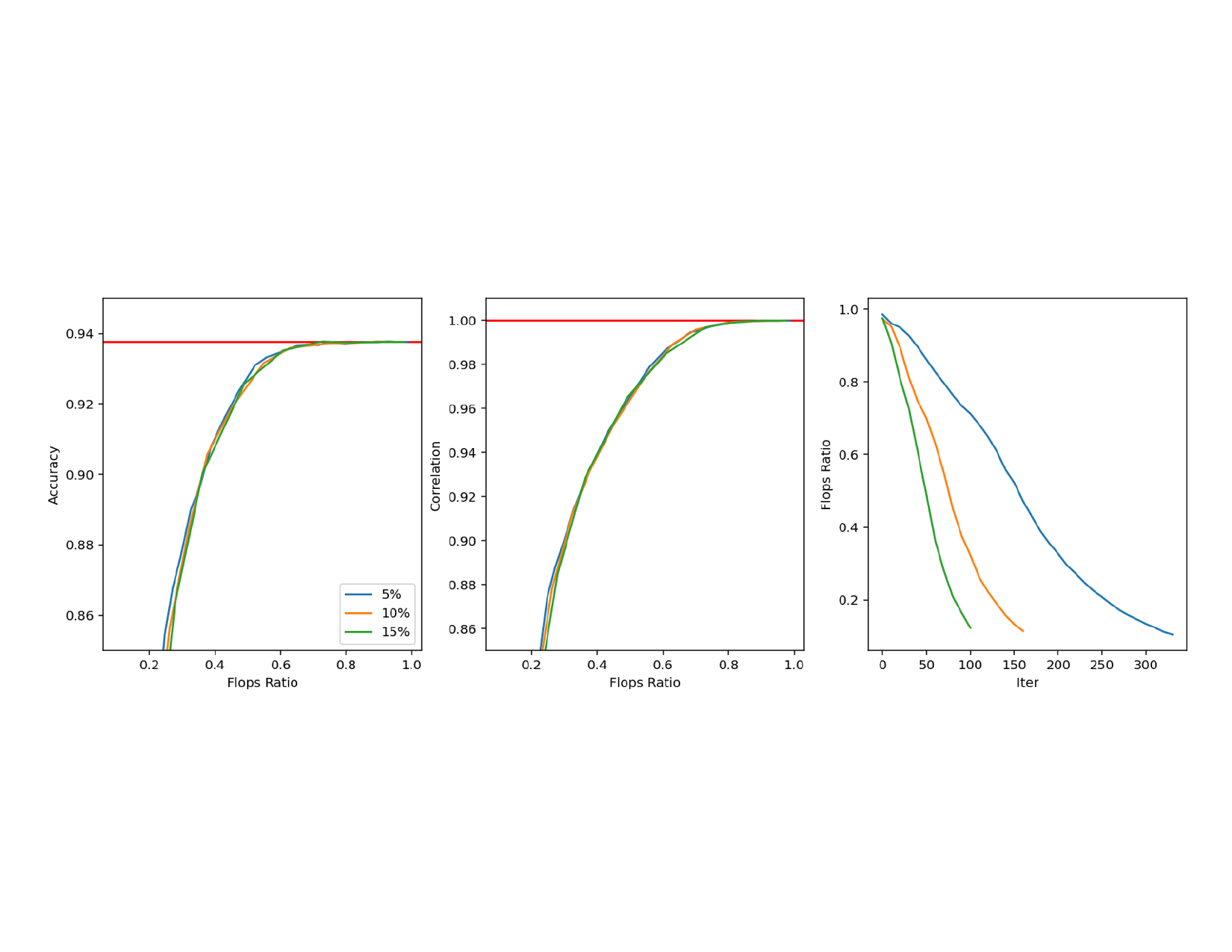}
    \caption{Iterative pruning performed at different amounts of channels/neurons cut per iteration in the chosen layer.  We see that the choice of number of channels per iteration does not have a large impact on the outcome, though smaller percents take a longer time.}
    \label{fig:sensitivity}
\end{figure}


\section{Additional experimental details and results}

\subsection{Illustrative example}

We created a simple synthetic data set for which we know the form of a relatively minimal model representation.

Draw $n$ points iid. from the unit circle in 2 dimensions.
Next select 2 (normalized) random vectors $v_1$ and $v_2$.
All labels are initialized to zero, and add or subtract 1 to the label for each time it produces a positive inner product with one of the random vectors.
With the ReLU activation function it is possible to correctly label all points with a one hidden layer fully connected network of width 4.
Construct pairs of neurons, with each pair aligned with the center of one of the random vectors.  We can think of the pair as creating a flat function by using one neuron to "cut the top" off of the round part of the function created by the other.

We parameterize this with an angle $\phi$ away from the pair. If each neuron in the pair has the same weight magnitude $w$, coefficient $\pm u$ and a bias $b \pm \delta/2$, then given a particular $w$, $b$ and $\delta$, we can write:

\begin{equation*}
       u ({\ReLU( w \cos \phi -b +\delta/2)- \ReLU(w \cos \phi -(b-\delta/2))})
\end{equation*}

As long as $w>b$, this looks like a step function, where the sides get steeper as $w$ approaches infinity.  This allows us to perfectly label all of the points given twice the number of neurons as we have random vectors $v_1$ and $v_2$, which is much smaller than the number of data points $n$.

We train an over-parameterized single hidden layer network to perform fairly well on this task, using an initialization scale that is common in some machine learning platforms such as TensorFlow~\cite{tensorflow2015-whitepaper}. This is shown in figure \ref{fig:patchesBonus}.  
However, magnitude pruning will not necessarily recognize the structure of the minimum representative network because both of the neurons in the pair construction may not have a large magnitude.
On this example we see that the interpolative decomposition is able to select neurons which resemble a close to minimal representative network.  

\begin{figure}[!tbp]
\centering
  \includegraphics[width=.4\linewidth]{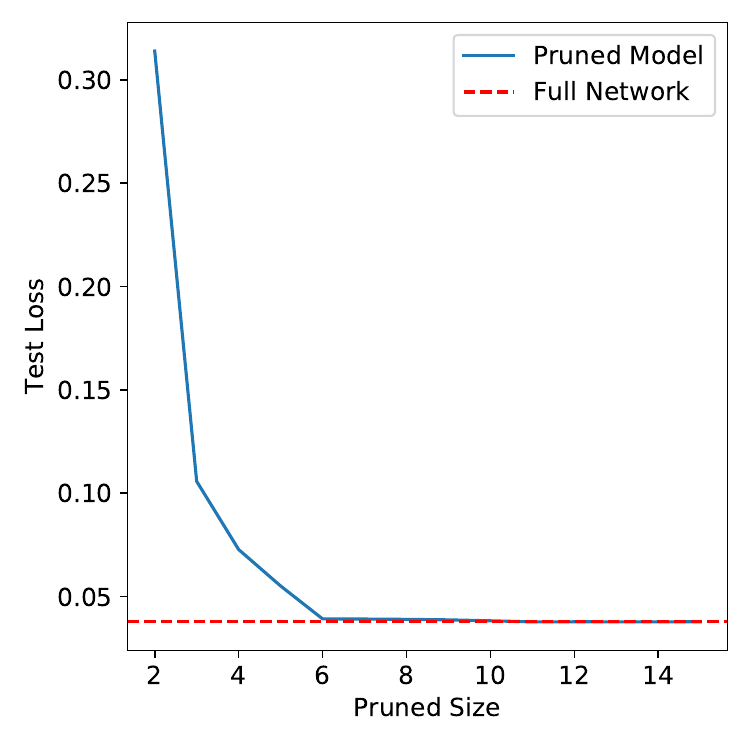}
  \includegraphics[width=.4\linewidth]{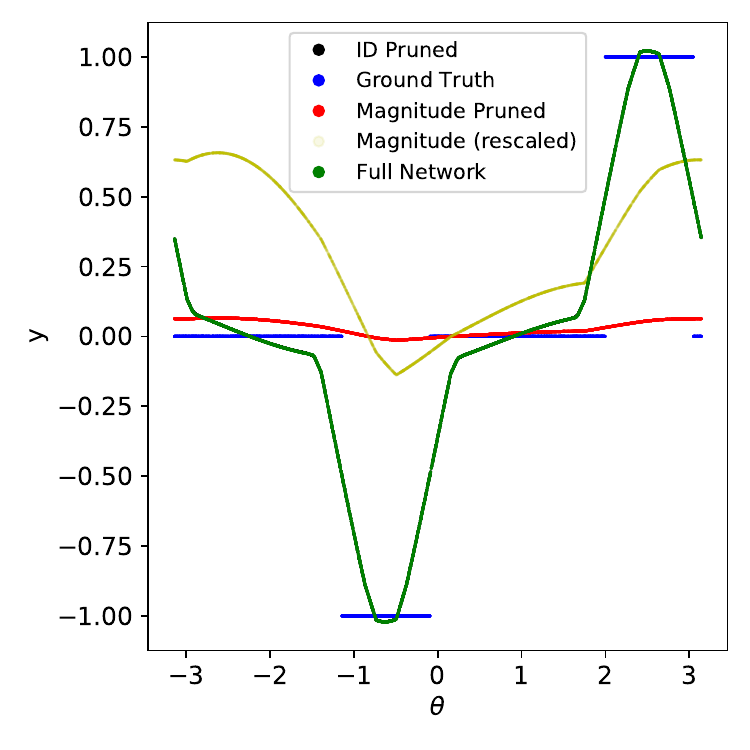}
\caption{A plot of loss v.s. number of neurons kept (left) and a plot of the function evaluation for the full network, ID pruning, and magnitude pruned model (right).  The data is drawn from the unit circle, and we parameterize X in terms of an angle $\theta$. To see more detail in the magnitude pruned function, we draw it with a scale of 10x applied uniformly.  We see that it takes very few (12) neurons to completely represent the function approximated by the network. In fact, the ID pruned function is visually indistinguishable from the full model in this case.  }
\label{fig:patchesBonus}
\end{figure}

In Figure~\ref{fig:patches} we see that the ID well represents the original network by ignoring duplicate neurons and taking into account differences in the bias.
The ID even achieves slightly better test loss than the original model.
Magnitude pruning does not approximate the original model well; it keeps duplicate neurons and fails to find important information with the same number of neurons.

\begin{figure}[!tbp]
\centering
  \includegraphics[width=.3\linewidth]{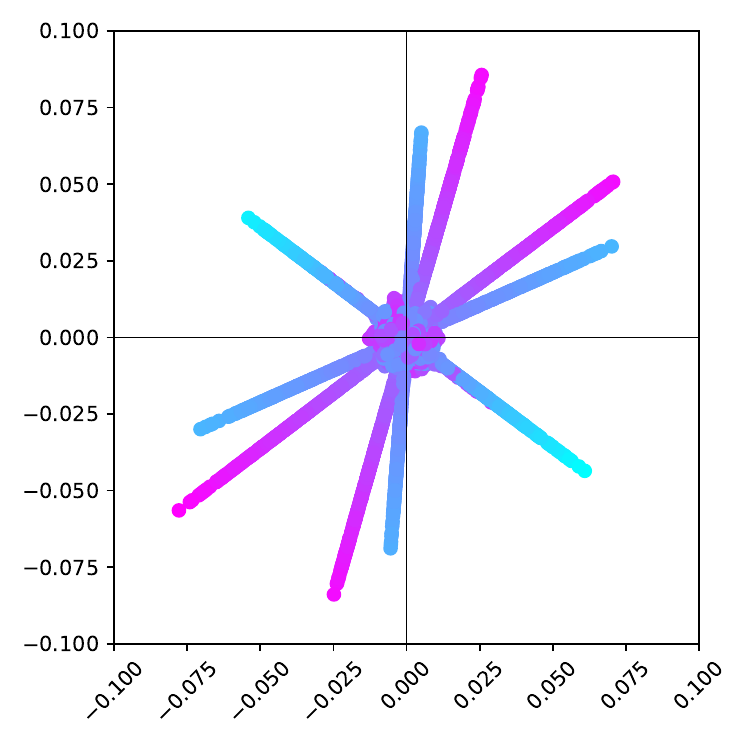}
  \includegraphics[width=.3\linewidth]{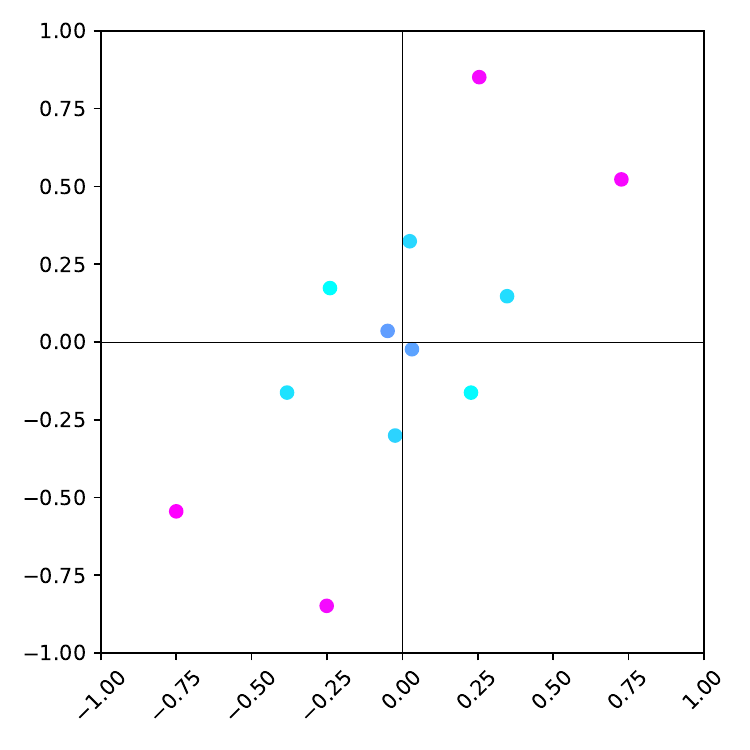}
  \includegraphics[width=.3\linewidth]{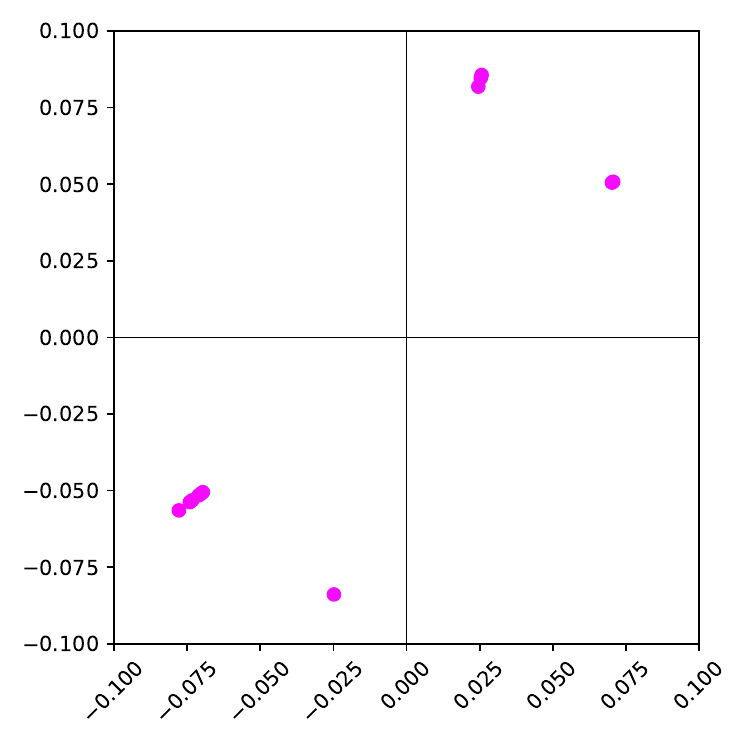}
    \caption{Neuron visualization for a simple 2-d regression task. The axes are the weights of the neurons, and the color is the bias.  We have the weights trained for a full model(left) with 5000 hidden nodes.  The 12 neurons kept by the ID (center) represent the function well. These are re-scaled by the magnitude of their coefficients in the second layer to display the effect of the ID.  Magnitude pruning (right) keeps duplicate neurons that do not represent the function.  The test loss is 0.037786 for the full model, 0.037781 for the model pruned with ID, and 0.33 magnitude-pruned model. }
\label{fig:patches}
\end{figure}

\subsection{Fashion MNIST}
\label{sec:fashionmnist}

For one and two hidden layer networks on Fashion MNIST~\cite{datafashionmnist} we compare the performance of ID and magnitude pruning to networks of the same size trained from scratch.
Each pruning method is used to prune to half the number of neurons of the original model, and is then fine-tuned for 10 epochs.
We use a stochastic gradient descent optimizer with a learning rate of 0.3 which decays by a factor of 0.9 for each epoch to train the initial networks.  Each was trained for 50 epochs.  Our pruning set was 10000 images, which did not need to be held out from training for the simple data set. The fine tuning ran for 10 epochs with an initial learning rate of 0.1 with a decay rate of 0.6 for two layer networks, and 0.2 with a learning rate decay of 0.7 for one layer.  Larger ID-pruned models (greater than 512 neurons) can be fine-tuned with a much smaller learning rate of 0.002. These learning rates and number of epochs may not be optimal but were determined through brief empirical tests. We used 5 random seeds for each size of model.  The error bars are reported as the uncertainty in the mean, defined in terms of the standard deviation $\sigma$, and number of independent trials $N$,  $\sigma_{mean}=\sigma/\sqrt{N}$.
Results are shown in Figure~\ref{fig:fmnist}; before fine-tuning the ID achieves a significantly higher test accuracy than magnitude pruning.
ID pruning with fine-tuning outperforms training from scratch. 
At sufficiently large network sizes, ID pruning alone performs similarly to training a network from scratch. 
When we start to see diminishing returns from adding more neurons, the performance of the various methods begin to converge.  

\begin{figure}
\centering
  \centering
  \includegraphics[width=.47\linewidth]{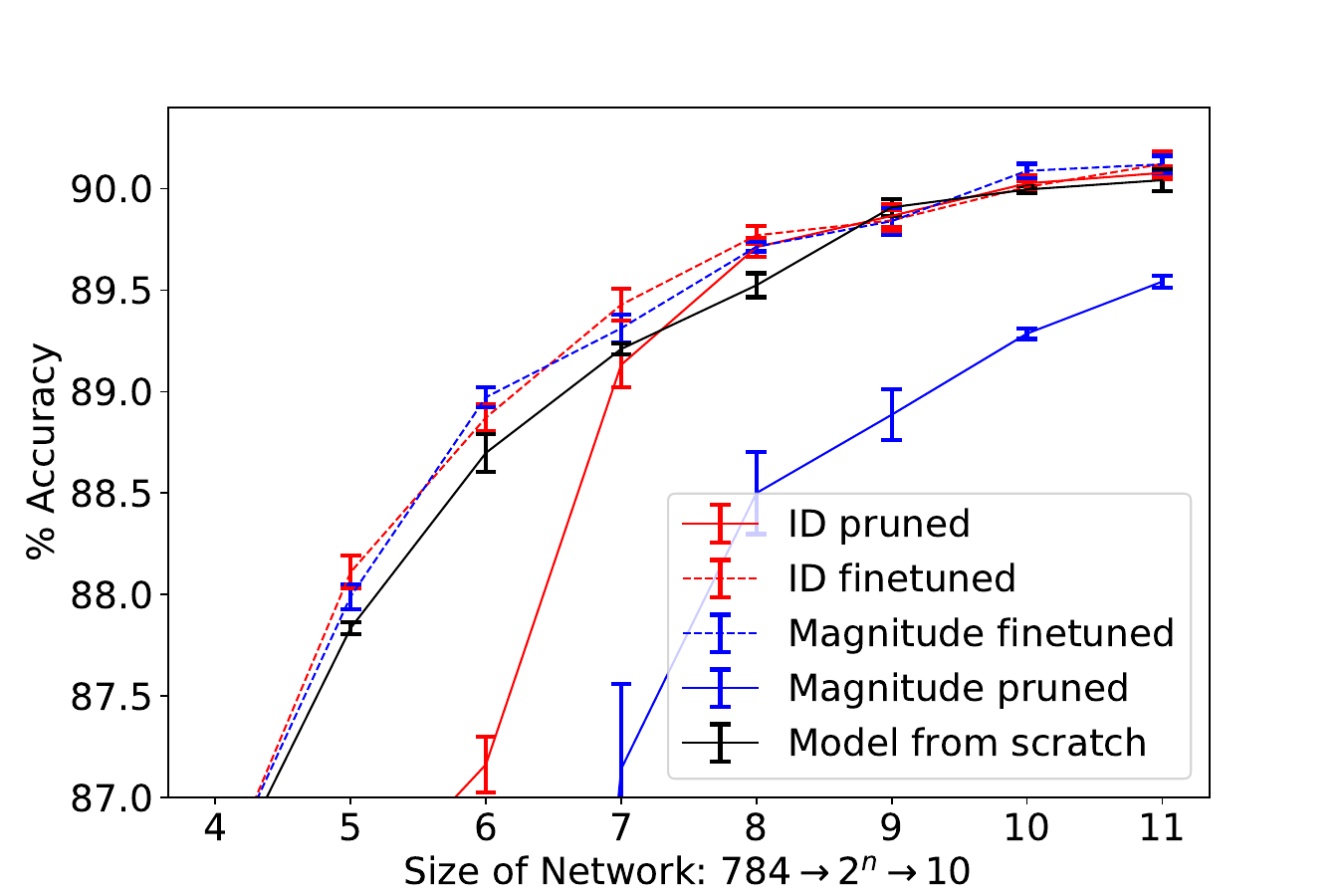}
  \hspace{2mm}
  \includegraphics[width=.47\linewidth]{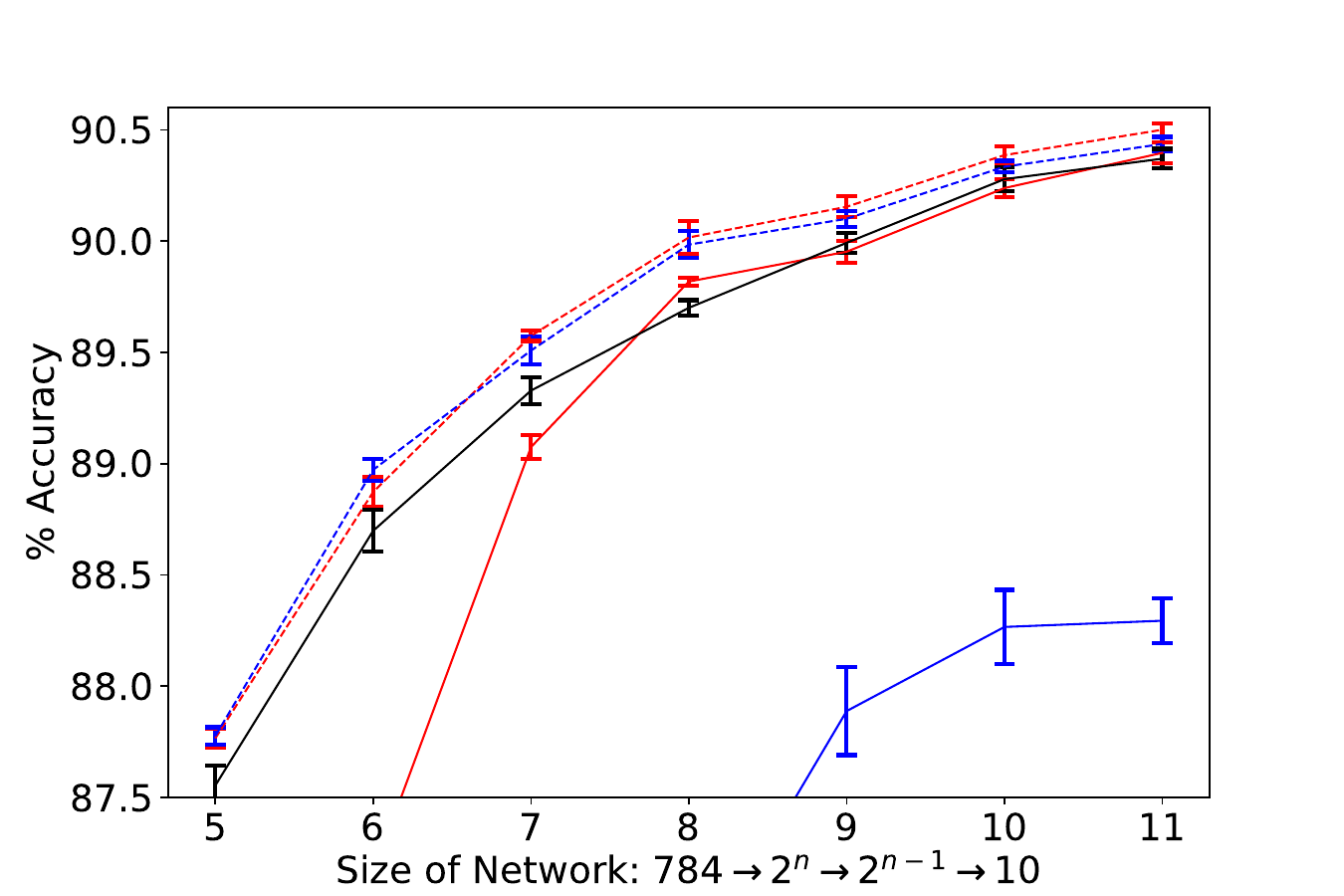}
  
    \caption{Accuracy for a one hidden layer (left) and two hidden layer (right) fully connected neural networks of varying size on Fashion-MNIST for ID and magnitude pruning, as well as training directly.
    These curves are averaged over 5 trials; error bars report uncertainty in the mean. 
    }
\label{fig:fmnist}
\end{figure}

\begin{figure}[ht]
\centering
  \includegraphics[width=.45\linewidth]{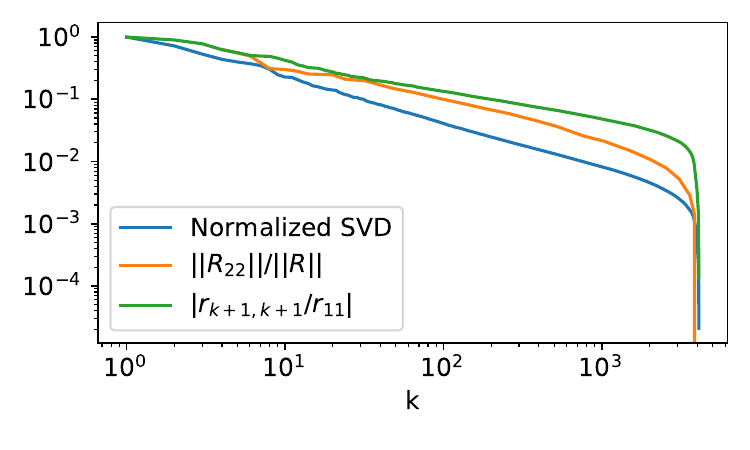}
  \includegraphics[width=.45\linewidth]{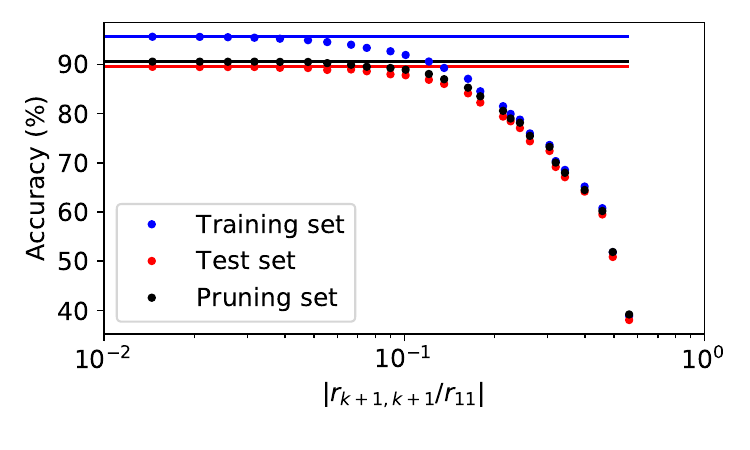}

    \caption{Left: Normalized singular value decay, $\|R_{22}\|_2/\|R\|_2$, and $\lvert r_{k+1,k+1}/r_{1,1}\rvert$ for a matrix from~\eqref{eq:1hiddenfc}) from a one hidden layer network (i.e., $g(W^\top X)$) trained on Fashion MNIST. We see that the metrics generally correlate well in this setting. Right: Accuracy of a single hidden layer network pruned from width 4096 to $k.$  The horizontal lines are the accuracy of the full sized network. The difference  between pruning and test accuracy is due to a slight class imbalance in the canonical test set.}
\label{fig:chosingk}
\end{figure}

\subsection{Additional ImageNet experiments on MobileNet V1}
\label{app:sec:imgnet_extra}
Here we give additional pruning experiments on Mobilenet V1 for ImageNet.
We see that the ID is competitive against compression methods which do not do any local fine tuning (Figure~\ref{fig:mobilenetImgNet_nolocFT}.
Figure~\ref{fig:mobilenetImgNet_yeslocFT} includes methods which do use local fine tuning; ID performs better than two out of the three.
Interestingly, we see that although the PCA and LRank methods performed well on an overparameterized network such as VGG-16, they do not work well on a much more efficient network.

\begin{figure}
\centering
\begin{subfigure}{.45\textwidth}
\centering
\includegraphics[width=.95\linewidth]{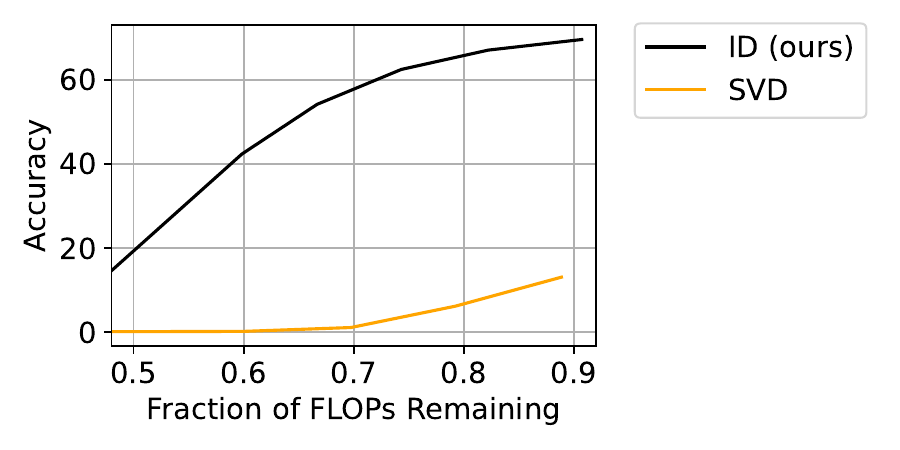}
\caption{Compare with methods that do not use any fine tuning.
}
\label{fig:mobilenetImgNet_nolocFT}
\end{subfigure}
\begin{subfigure}{0.45\textwidth}
\centering
\includegraphics[width=.95\linewidth]{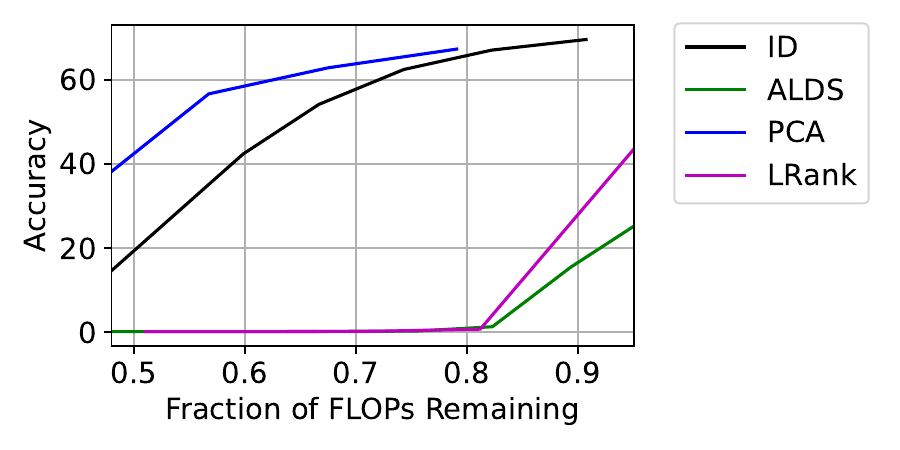}
\caption{Comparing with methods that use a local fine tuning correction.  
}
\label{fig:mobilenetImgNet_yeslocFT}
\end{subfigure}
\centering
\caption{MobileNet V1 compression results on ImageNet. Note that the matrix methods often work by changing the depth of convolution in the network, and given that MobileNet uses depth-wise separable convolutions, it is not surprising that matrix methods would perform poorly.  }
\end{figure}

\subsection{Imagenet pre fine tuning correlation}
\label{app:sec:imgnet_preft_corr}

\begin{figure}
    \centering
    \includegraphics[width=0.47\linewidth]{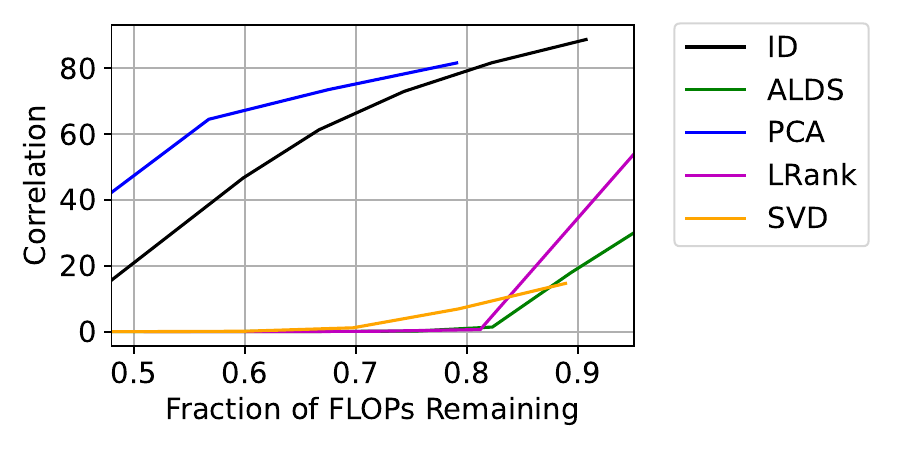}
    \caption{MobileNet correlation results compared against various compression methods. Note that several other compression methods work by doing decompositions on the weight matrix and changing the number of groups in the convolution.  Due to the MobileNet architecture, this may result in a very low accuracy for methods that work well on other architectures.}
    \label{fig:movCorr_ImgNet}
\end{figure}

\begin{figure}
\centering
\begin{subfigure}{.47\textwidth}
\centering
\includegraphics[width=.95\linewidth]{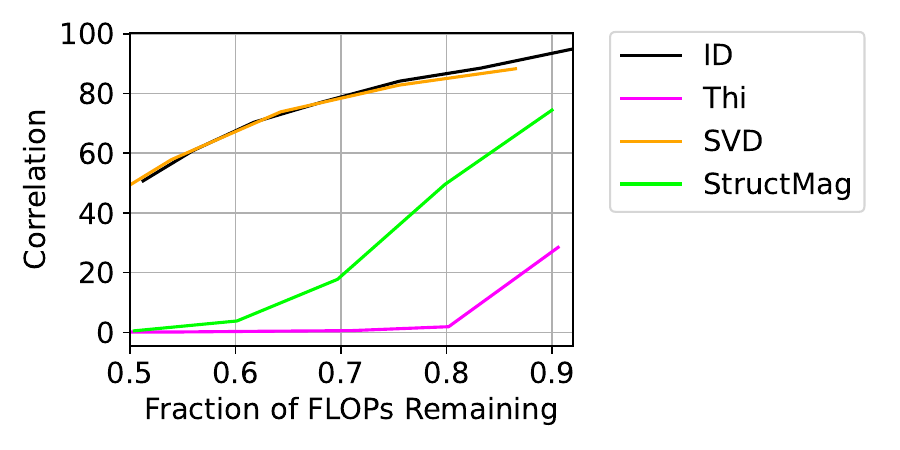}
\caption{
Comparing with methods that do not use a local fine tuning correction.
}
\label{fig:vggCorrImgNet_nolocFT}
\end{subfigure}
\begin{subfigure}{0.47\textwidth}
\centering
\includegraphics[width=.95\linewidth]{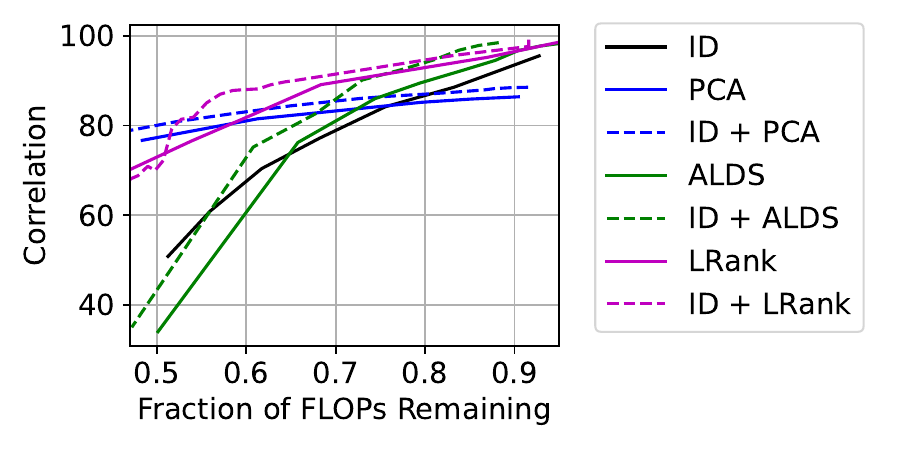}
\caption{
Comparing with methods that use a fine tuning correction.
}
\label{fig:vggCorrImgNet_yeslocFT}
\end{subfigure}
\centering
\caption{VGG-16 correlation results on ImageNet}
\end{figure}

Here we report correlation results om ImageNet with the MobileNet V1 and VGG-16 models.
For methods which do not incorporate a fine tuning correction (Figure~\ref{fig:vggCorrImgNet_nolocFT}), we see that the ID proves as good as any other method we compare to.
Figure~\ref{fig:vggCorrImgNet_yeslocFT} compares to methods which do incorporate a local fine tuning correction. 
Similar to accuracy, we see that composing ID with another compression method improves upon either method.

\subsection{Hyper-parameter details}
\paragraph{CIFAR-10}
The VGG-16 models are trained with 5 random seeds and with hyper-parameter specifications and code provided by \citet{liu2019rethink}.
The test set is randomly partitioned into a prune set and new test set.
The Iterative ID used a held out set of size 1000.
For the iterative ID on VGG-16, we prune 10\% of a layer per iteration.  
For VGG-16 fine tuning we use SGD with initial learning rate of 5e-3, reduced to 2.5e-3 after 20 epochs and again reduced to 1e-3 after a another 20 epochs have passed.
We use a batch size of 128, momentum of 0.9, and weight decay of 5e-4.

The full-size Mobilenet V1 network for Cifar-10 was trained using the ADAM optimizer for 120 epochs, using the default parameters of lr=.001, betas= (.9, .999).  The test set is randomly partitioned into a prune set and new test set.  

\paragraph{ImageNet}
For VGG-16 the iterative ID algorithm uses a randomly held out set of 5000 images with a stepsize parameter of 5\%.
For fine-tuning, we use SGD with a learning rate of 1e-7, batch size of 256, momentum of 0.9, and weight decay of 1e-4.
For ID+PCA, we switched to learning rate 1e-8 at 75 epochs.

For Mobilenet V1 we prune to a constant fraction using the ID with a randomly sampled held out set of 1000 images.

\subsection{Estimating compute}
The experiments on the illustrative example and Fashion MNIST were performed on a 2019 iMac running an Intel I9. The illustrative example computes in minutes. We trained 15 different model configurations, 8 one hidden layer, and 7 two hidden layer network sizes.  For 5 random seeds, we used $5*(50+10+10)=350$ epochs per model size, and trained 15 different model configurations.   We used 10,000 images for the pruning set, however, the runtime for computing the ID is cheap compared to training, using the scipy QR decomposition function which calls a LAPACK subroutine~\cite{lapack}.

The experiments on CIFAR-10 were performed on a NVIDIA GeForce GTX 1080Ti on a university compute cluster. 
Epochs of fine tuning are specified in the tables of the main paper.
Computing the interpolative decomposition were comparatively cheap, only requiring 1000 data points. 
Computing costs for any compression method were negligible compared to fine tuning.

ImageNet experiments were conducted on a university compute cluster.
For fine tuning NVIDIA GeForce RTX 3090, RTX A6000, or TITAN RTX were used.
The compression portion was conducted on a NIVIDIA GeForce GTX 1080Ti or FTX 2080Ti.
Epochs of fine tuning are specified in the tables of the main paper.
The compute cost of the compression methods before fine tuning (including interpolative decomposition) were trivial compared to any amount of global fine tuning.

\section{Sensitivity to pruning set}
\label{sec:sensitivity}
\subsection{Pruning set size}
We compare the pre-fine-tuning accuracy as a function of the pruning set size. 
Here we show that the accuracy is not particularly sensitive to the pruning set size.
Table~\ref{tab:id_sensitivity_imagenet} shows on ImageNet that our ID-based pruning method is robust to the prune set size, and in fact quite efficient.
Figure~\ref{fig:id_sensitivity_cifar10} shows on CIFAR-10 that this trend persists across a wide range of compression levels.
Note that the number of pruning examples must be at least the number of neurons or channels that we prune to for each layer.

\begin{table}[]
    \centering
    \begin{tabular}{cc}
        Prune set size & Pre-fine-tuning accuracy \\
        \hline
        5k & 68.03 \\
        10k & 68.06 
    \end{tabular}
    \caption{Pre-fine-tuning accuracy compared to prune set size for VGG16 model pruned to 25\% FLOPs reduction on ImageNet.}
    \label{tab:id_sensitivity_imagenet}
\end{table}

\begin{figure}
\centering
\centering
\includegraphics[width=.5\linewidth]{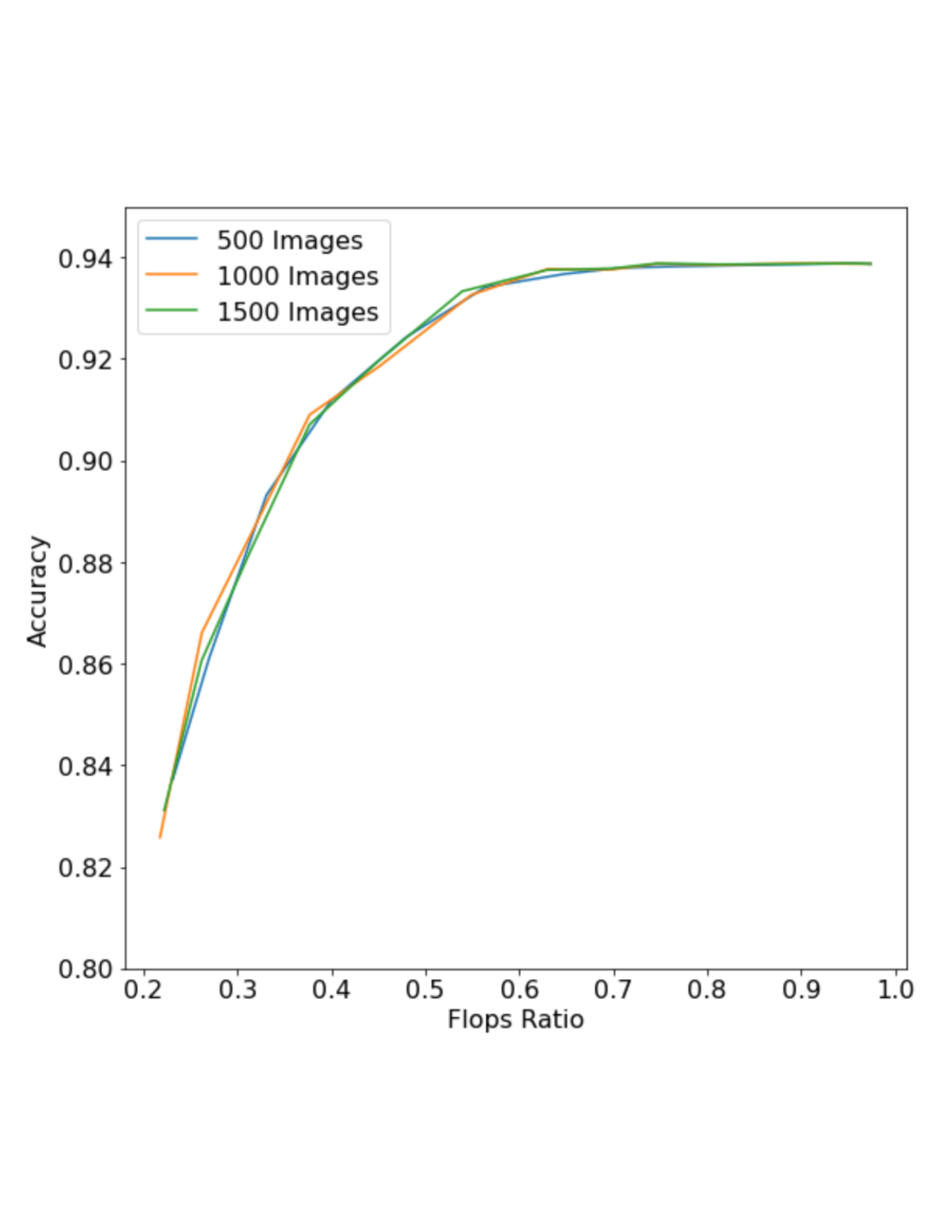}
\caption{
Comparing the pre-fine-tuning accuracy and FLOP reduction for different pruning set sizes on VGG-16 Cifar-10 model.  We see that above the minimum threshold, pruning set size has a minimial impact on the accuracy of the pruned model. 
}
\label{fig:id_sensitivity_cifar10}
\end{figure}

\subsection{Pruning set contents}

We want our method to be robust to the data selection method used to generate our pruning set. 
We test our method to this specific sensitivity by removing an entire class from the pruning set.

We begin with a full-sized VGG-16 CIFAR-10 model that was trained
on all 10 classes.  Then we draw a pruning set from only 9 of the classes, completely leaving out images from one of the classes. We prune to 50\% of the original FLOPs, using Iterative ID, and compare the per-class test accuracies for each of the 10 classes. Figure~\ref{fig:id_sensitivity_class} shows that the ID maintains good accuracy even on images from the class that was excluded from the pruning set.  

We expect that other methods which preserve the model's decision boundaries (and therefore correlation) will likely show similar results.  However, methods which require extensive fine tuning and effectively re-train the network will likely not recover accuracy on the missing class.  To demonstrate this, we prune the same full-sized network using magnitude pruning using the same pruning set to 50\% FLOPs reduction, and then fine tune with only 9 classes in the fine tuning set.  As shown in Figure~\ref{fig:mag_sensitivity_class}, the accuracy for the other 9 classes recovers, however, the accuracy for the class that was removed does not.  

This experiment
suggests that pruning methods which maintain properties of the original model may potentially be able to maintain fairness (i.e. per-class accuracy) even when some classes of data are under-represented.  
Our compression method was able to reasonably preserve per-class accuracies (our measure of fairness), even without access to data from one of those classes.


\begin{figure}
\centering
\begin{subfigure}{.47\textwidth}
\includegraphics[width=.95\linewidth]{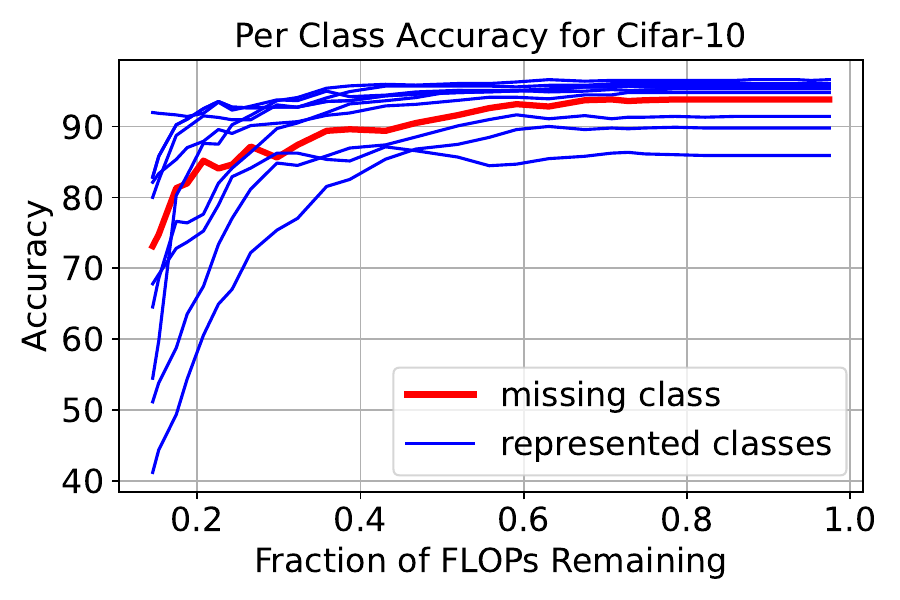}
\caption{Per class accuracies while pruning a VGG-16 model using only data from 9 classes.}
\label{fig:id_sensitivity_class}
\end{subfigure}
\begin{subfigure}{.47\textwidth}
\includegraphics[width=.95\linewidth]{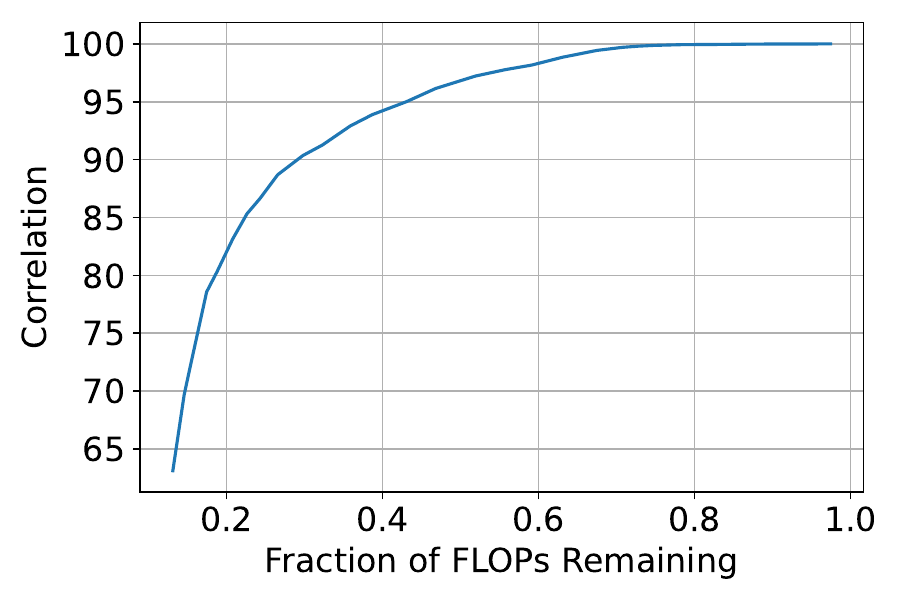}
\caption{Model Correlation when ID pruning using data from only 9 classes}
\label{fig:id_sensitivity_corr}
\end{subfigure}
\caption{
Per-class accuracies as we prune a VGG-16 model on CIFAR-10 with only access to data from 9/10 classes.  No fine tuning was done.  We see that ID maintains reasonable accuracy on all (including the unrepresented) classes, and stays correlated with the original model.  
}

\end{figure}

\begin{figure}
\centering
\begin{subfigure}{.47\textwidth}
\includegraphics[width=.95\linewidth]{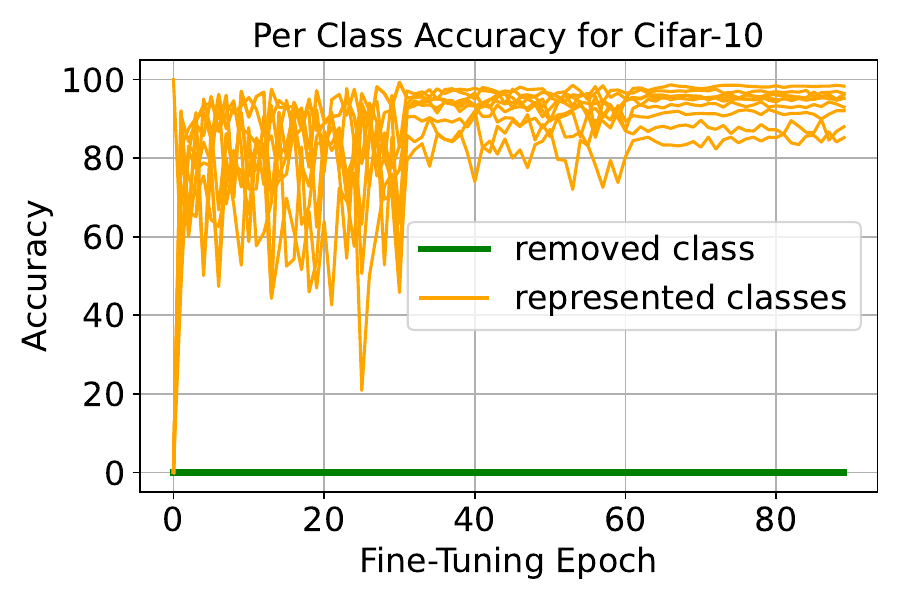}
\caption{Per class accuracies while fine tuning a magnitude-pruned model with only 9 classes}
\label{fig:mag_sensitivity_class}
\end{subfigure}
\begin{subfigure}{.47\textwidth}
\includegraphics[width=.95\linewidth]{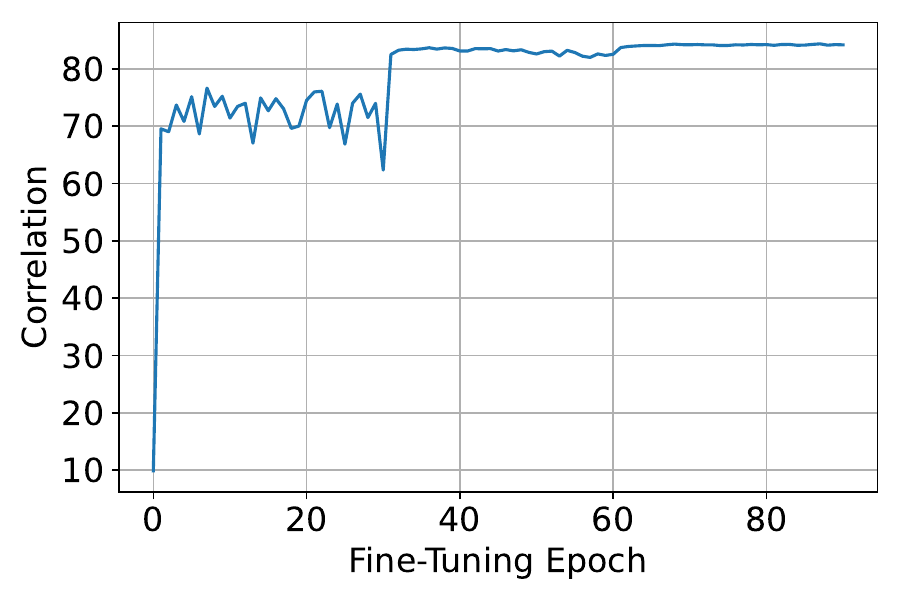}
\caption{Model Correlation while fine-tuning a magnitude pruned model using data from only 9 classes.}
\label{fig:mag_sensitivity_corr}
\end{subfigure}
\caption{
Comparing class accuracies for a 50\% FLOPS VGG-16 model pruned using magnitude pruning, and then fine-tuned using only 9 out of 10 classes of CIFAR-10.  We see that the model recovers on the represented classes, but not on the unrepresented class. 
}

\end{figure}

\end{document}